\newcommand{\ModelName}{DiffSP}
\newtheorem{proposition}{Proposition}
\newtheorem{prop}{Proposition}
\begin{document}

\title{Robust Graph Learning Against Adversarial Evasion Attacks via Prior-Free Diffusion-Based Structure Purification}

\author{Jiayi Luo}
\email{luojy@buaa.edu.cn}
\affiliation{%
  \institution{SKLCCSE, School of Computer Science and Engineering \\ Beihang University}
  \city{Beijing}
  \country{China}
}

\author{Qingyun Sun}
\authornote{Corresponding author.}
\email{sunqy@buaa.edu.cn}
\affiliation{%
  \institution{SKLCCSE, School of Computer Science and Engineering \\ Beihang University}
  \city{Beijing}
  \country{China}
}

\author{Haonan Yuan}
\email{yuanhn@buaa.edu.cn}
\affiliation{%
  \institution{SKLCCSE, School of Computer Science and Engineering \\ Beihang University}
  \city{Beijing}
  \country{China}
}

\author{Xingcheng Fu}
\email{fuxc@gxnu.edu.cn}
\affiliation{%
  \institution{Key Lab of Education Blockchain and Intelligent Technology\\Guangxi Normal University}
  \state{Guangxi}
  \country{China}
}

\author{Jianxin Li}
\email{lijx@buaa.edu.cn}
\affiliation{%
  \institution{SKLCCSE, School of Computer Science and Engineering \\ Beihang University}
  \city{Beijing}
  \country{China}
}

\begin{abstract}
Adversarial evasion attacks pose significant threats to graph learning, with lines of studies that have improved the robustness of Graph Neural Networks (GNNs).
However, existing works rely on priors about clean graphs or attacking strategies, which are often heuristic and inconsistent.
To achieve robust graph learning over different types of evasion attacks and diverse datasets, we investigate this problem from a prior-free structure purification perspective.
Specifically, we propose a novel \underline{\textbf{Diff}}usion-based \underline{\textbf{S}}tructure \underline{\textbf{P}}urification framework named \textbf{\ModelName}, which creatively incorporates the graph diffusion model to learn intrinsic distributions of clean graphs and purify the perturbed structures by removing adversaries under the direction of the captured predictive patterns without relying on priors.
\ModelName~is divided into the forward diffusion process and the reverse denoising process, during which structure purification is achieved.
To avoid valuable information loss during the forward process, we propose an LID-driven non-isotropic diffusion mechanism to selectively inject noise anisotropically.
To promote semantic alignment between the clean graph and the purified graph generated during the reverse process, we reduce the generation uncertainty by the proposed graph transfer entropy guided denoising mechanism.
Extensive experiments demonstrate the superior robustness of \ModelName~against evasion attacks.
\end{abstract}

\keywords{robust graph learning, adversarial evasion attack, graph structure purification, graph diffuison}

\begin{CCSXML}
<ccs2012>
   <concept>
       <concept_id>10002950.10003624.10003633.10010917</concept_id>
       <concept_desc>Mathematics of computing~Graph algorithms</concept_desc>
       <concept_significance>500</concept_significance>
       </concept>
   <concept>
       <concept_id>10003033.10003034</concept_id>
       <concept_desc>Networks~Network architectures</concept_desc>
       <concept_significance>500</concept_significance>
       </concept>
 </ccs2012>
\end{CCSXML}

\ccsdesc[500]{Mathematics of computing~Graph algorithms}
\ccsdesc[500]{Networks~Network architectures}

\maketitle
\section{Introduction}
Graphs are essential for modeling relationships in the social networks~\cite{zhou2023hierarchical}, recommendation systems~\cite{wu2022graph}, financial transactions~\cite{chen2022antibenford}, \textit{etc}.
While Graph Neural Networks (GNNs)~\cite{kipf2016semi} have advanced this field, concerns about their robustness have arisen~\cite{zhu2019robust, zhang2024can, jin2020graph}. 
Studies show that GNNs are vulnerable to evasion attacks~\cite{sun2022adversarial}, particularly structural perturbations~\cite{zugner2018adversarial, zhang2024can} where tiny changes to the graph topology can lead to a sharp performance decrease.

A wide range of works have been proposed to enhance graph robustness, categorizing into: 1) \textit{Structure Learning Based} methods~\cite{zhao2023self, in2024self, deng2022garnet} that focus on refining graph structures; 2) \textit{Preprocessing Based} methods~\cite{entezari2020all, wu2019adversarial} that focus on denoising graphs during preprocessing stage; 3) \textit{Robust Aggregation Based} methods~\cite{chen2021understanding, zhu2019robust, tang2020transferring, geisler2021robustness} that modify the aggregation process; and 4) \textit{Adversarial Training Based} methods~\cite{xu2019topology} that trains GNNs with adversarial samples.
However, most approaches heavily depend on priors regarding clean graphs or attack strategies~\cite{in2024self}. For example, the homophily prior~\cite{jin2021node,zhang2020gnnguard,zhao2023self,in2023similarity} and the low-rank prior~\cite{entezari2020all, xu2021speedup, lu2022robust, jin2020graph} are among the most commonly used assumptions. Unfortunately, when node features are unavailable, measuring the feature similarity becomes infeasible~\cite{in2024self}. Additionally, imposing low-rank constraints risks discarding information encoded in the small singular values~\cite{deng2022garnet}. 
These prior-dependent limitations significantly hinder the ability of existing methods to achieve the universal robustness in graph learning across diverse scenarios.

\begin{figure}[!t]
\centering
\includegraphics[width=\linewidth]{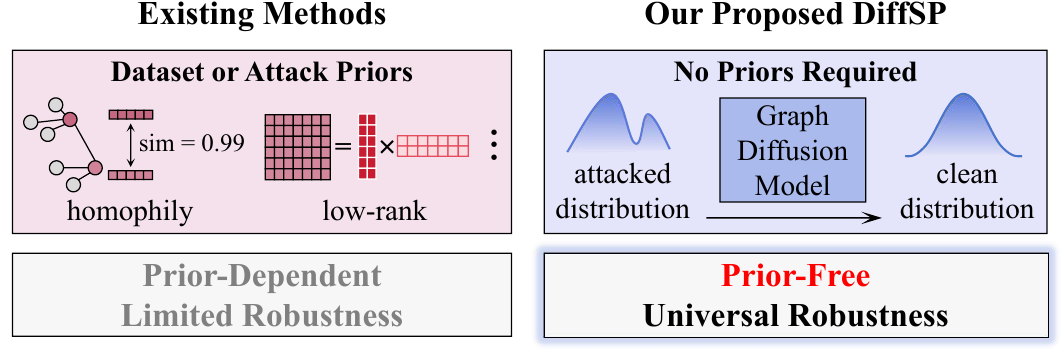}
\vspace{-1.8em}
\caption{Comparison of existing robust GNNs and \ModelName.
Existing robust GNNs rely on priors that limit adaptability, while \ModelName\ is prior-free with universal robustness.
}
\label{fig:compare}
\vspace{-2.3em}
\end{figure}

To achieve prior-free robustness, we aim to adaptively learn the intrinsic distribution from clean graphs, which capture the underlying correlation and predictive patterns to enhance the robustness of GNNs when facing unseen samples.
Driven by this goal, 
we model the clean graph as a probability distribution over nodes and edges, encapsulating their predictable properties~\cite{li2018learning}. Attacks are disruptions to underlying distribution, causing it to shift away from the clean distribution~\cite{li2023revisiting}.
To learn the latent distribution of clean graphs, diffusion models~\cite{niu2020permutation,vignac2022digress} are an ideal choice, as shown in Figure~\ref{fig:compare}. Instead of relying on priors, they model the implicit distributions in a data-centric manner, remaining agnostic to the dataset and attack strategies.
Unlike other generative models, the ``noising-denoising'' process of graph diffusion models is well-suited to our goal. 
When encountering an attacked graph, the trained graph diffusion model gradually injects noise to obscure adversarial information during the forward process. 
In the reverse process, step-wise denoising enables removing both the adversarial information and injected noise, achieving prior-free graph purification. 

Nevertheless, it still faces two significant challenges:
1) \textit{How can we accurately identify and remove adversarial perturbations without disrupting the unaffected portions of the graph?}
Evasion attacks typically involve subtle perturbations, making these alterations difficult to detect~\cite{sun2022adversarial}.
During the forward process, the isotropic indiscriminate noise affects both the normal and adversarial nodes, leading to excessive perturbations that can overmodify the graph. As a result, essential information may be lost, complicating the recovery of the clean structure during the reverse denoising phase.
2) \textit{How can we ensure that the purified graph preserves the same semantics as the target clean graph?}
The generation process in diffusion models involves repeated sampling from the distribution, with the inherent randomness promoting the creation of diverse graph samples. While this diversity can be beneficial in other domains of research, it poses a significant challenge to our task. Our objective is not to produce varied graphs, but to accurately recover the clean graph. Consequently, even if adversarial perturbations are successfully removed, there remains a risk that the purified graph may fail to semantically align with the target clean graph.

To address these challenges, we propose a novel  \underline{\textbf{Diff}}usion-based \underline{\textbf{S}}tructure \underline{\textbf{P}}urification framework named \textbf{\ModelName}, which creatively incorporates
the diffusion model to learn the intrinsic latent distributions of clean graphs and purify the perturbed structures by removing adversaries under the direction of the captured predictive patterns without relying on any priors.
To remove adversaries while preserving the unaffected parts ($\rhd$ \textit{Challenge 1}), we propose an LID-driven non-isotropic diffusion mechanism to selectively inject controllable noise anisotropically.
By utilizing this non-isotropic noise, \ModelName~effectively drowns out adversarial perturbations with minimal impact on normal nodes.
To promote semantic alignment between the clean graph and the purified graph generated during the reverse process ($\rhd$ \textit{Challenge 2}), we reduce the generation uncertainty by the proposed graph transfer entropy guided denoising mechanism.
Specifically, we maximize the transfer entropy between successive time steps during the reverse denoising process. This reduces uncertainty, stabilizes the graph generation, and guides the process toward achieving accurate graph purification.
The main contributions of this paper are as follows:
\begin{itemize}[leftmargin=*]
    \item  We propose \ModelName, a novel framework for adversarial graph purification against adversarial evasion attacks. To the best of our knowledge, this is the first prior-free robust graph learning framework by incorporating the graph diffusion model.
    \item We design an LID-driven non-isotropic forward diffusion process combined with a transfer entropy guided reverse denoising process, enabling precise removal of adversarial information while guiding the generation process toward target graph purification.
    \item Extensive experiments on both graph and node classification tasks on nine real-world datasets demonstrate the superior robustness of \ModelName~against nine types of evasion attacks.
\end{itemize}

\section{Related Work}

\textbf{Robust Graph Learning.}
Various efforts have been made to improve the robustness of graph learning against adversarial attacks, which can be grouped into four categories. 
1) \textit{Structure Learning Based} methods~\cite{jin2020graph, deng2022garnet, zhao2023self, in2024self} adjust the graph structure by removing unreliable edges or nodes to improve robustness. Pro-GNN~\cite{jin2020graph} uses low-rank and smoothness regularization, GARNET~\cite{deng2022garnet} employs probabilistic models to learn a reduced-rank topology, GSR~\cite{zhao2023self} leverages contrastive learning for structure refinement, and SG-GSR~\cite{in2024self} addresses structural loss and node imbalance.
2) \textit{Preprocessing Based} methods~\cite{entezari2020all, wu2019adversarial}  modify the graph before training. SVDGCN~\cite{entezari2020all} retains top-k singular values from the adjacency matrix, while JaccardGCN~\cite{wu2019adversarial} prunes adversarial edges based on Jaccard similarity.
3) \textit{Robust Aggregation Based} methods~\cite{tang2020transferring, zhu2019robust, chen2021understanding, geisler2021robustness} improve the aggregation process to reduce sensitivity to adversarial perturbations. PA-GNN~\cite{tang2020transferring} and RGCN~\cite{zhu2019robust} use attention mechanisms to downweight adversarial edges, while Median~\cite{chen2021understanding} and Soft-Median~\cite{geisler2021robustness} apply robust aggregation strategies to mitigate the effect of noisy features.
4) \textit{Adversarial Training Based} methods~\cite{xu2019topology} 
incorporate adversarial examples during training using min-max optimization to enhance resistance to attacks.

\textbf{Graph Diffusion Models.}
Diffusion models have achieved significant success in graph generation tasks. Early works  \cite{niu2020permutation,jo2022score} extended stochastic differential equations to graphs, but faced challenges due to the discrete nature of graphs. Graph structured diffusion \cite{vignac2022digress,haefeli2022diffusion} addressed this by adapting D3PM~\cite{austin2021structured}, improving the quality and efficiency of graph generation. In addition, HypDiff~\cite{fu2024hyperbolic} introduced a geometrically-based framework that preserves non-isotropic graph properties.
To enhance scalability, EDGE~\cite{chen2023efficient} promotes sparsity by setting the empty graph as the target distribution. 
GraphMaker~\cite{li2023graphmaker} further improved graph quality by applying asynchronous denoising to adjacency matrix and node features.
However, directly applying existing graph diffusion models fails to achieve our goal because the indiscriminate noise risks damaging clean nodes. Additionally, the diversity of the graph diffusion model may lead to generated graphs that fit the clean distribution but have semantic information that differs from the target clean graph.

\begin{figure*}[!htbp]
\centering
\includegraphics[width=1.0\textwidth]{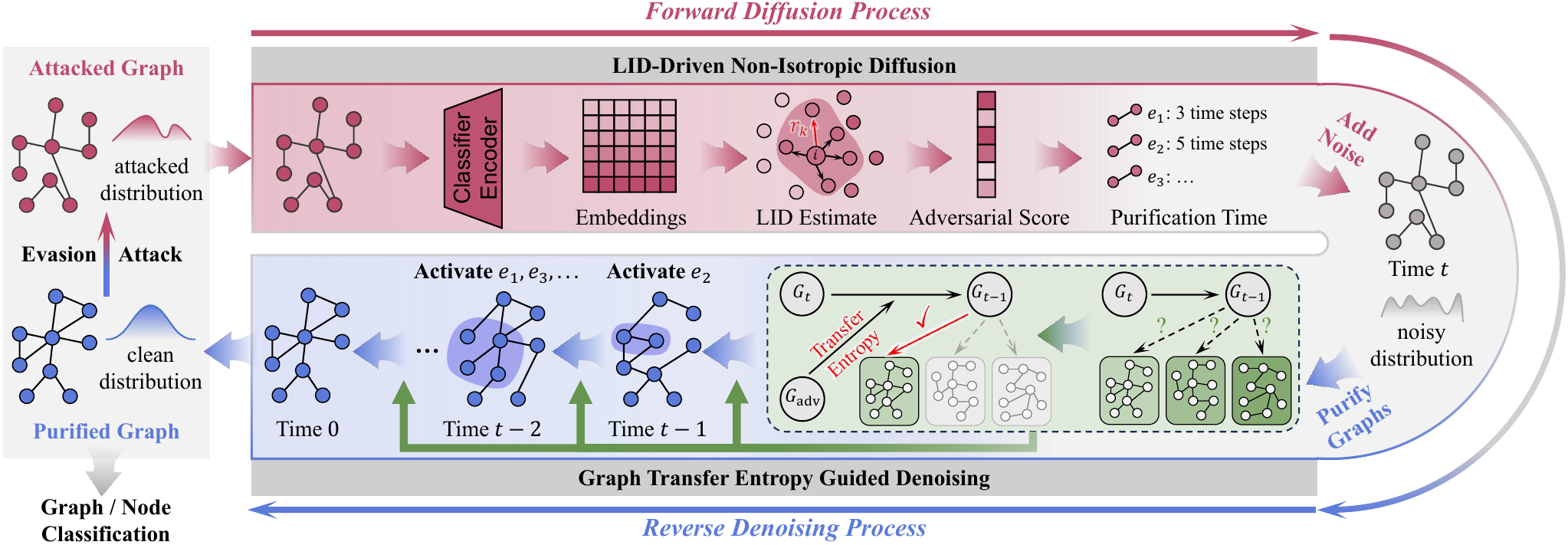}
\vspace{-1.7em}
\caption{The overall architecture of \ModelName. \ModelName~first employs a diffusion model to learn the predictive patterns of clean graphs. Then for the adversarial graph under evasion attack: 1) \ModelName\ injects non-isotropic noise by adjusting the diffusion time for each edge based on its adversarial degree, determined by LID. 2) During the generation process, \ModelName\ reduces uncertainty and guides the generation toward the target clean graph by maximizing the transfer entropy between two successive time steps.}
\label{fig:framework}
\end{figure*}

\section{Notations and Problem Formulation} 
In this work, we focus on enhancing robustness against adversarial evasion attacks with more threatening structural perturbation~\cite{zugner2018adversarial}, 
where attackers perturb graph structures during the test phase, after the GNNs have been fully trained on clean datasets~\cite{biggio2013evasion}.
We represent a graph as $G=(\mathbf{X},\mathbf{A})$, where $\mathbf{X}$ is the node features and $\mathbf{A}$ is the adjacency. An attacked graph is denoted as $G_{\text{adv}}=(\mathbf{X}, \mathbf{A}_{\text{adv}})$, where $\mathbf{A}_{\text{adv}}$ is the perturbed adjacency matrix.
Let $c_{\boldsymbol{\theta}}$ be the GNN classifier trained on clean graph samples, and  $\mathcal{D}_\text{test} = \{(\hat{s}_j, y_j)\}_{j=1}^M$ represent $M$ attacked samples, where each $\hat{s}_j$ is a graph or a node, and $y_j$ is the corresponding label. The attacker's goal is to maximize the number of misclassified samples, formulated as $\max \sum_{j=1}^{M} \mathbb{I}(c_{\boldsymbol{\theta}}(\hat{s}_j) \neq y_j)$, by perturbing up to $\epsilon$ edges, where $\epsilon$ is constrained by the attack budget $\Delta$.
Our objective is to purify the attacked graph, reducing the effects of adversarial perturbations, and reinforcing the robustness of the GNNs to enhance the performance of downstream tasks.
\section{\ModelName}
In this section, we introduce our proposed framework named \ModelName\, which purifies the graph structure based on the learned predictive patterns without relying on any priors about the dataset or attack strategies. The overall architecture of \ModelName\ is shown in Figure~\ref{fig:framework}.
We first present our graph diffusion purification model which serves as the backbone of \ModelName, 
followed by detailing the two core components: the LID-Driven Non-Isotropic Diffusion Mechanism and the Graph Transfer Entropy Guided Denoising Mechanism. 

\subsection{Graph Diffusion Purification Model} \label{method:part1}

For the backbone of \ModelName, we incorporate the structured diffusion model \cite{austin2021structured, vignac2022digress, li2023graphmaker}, which has shown to better preserve graph sparsity while reducing computational complexity~\cite{vignac2022digress, haefeli2022diffusion}. Since we focus on the more threatening structural perturbations~\cite{zugner2018adversarial}, we exclude node features from the diffusion process and keep them fixed.
Specifically, the noise in the forward process is represented by a series of transition matrices, \textit{i.e.}, $\big[\mathbf{Q}_{\mathbf{A}}^{(1)}, \mathbf{Q}_{\mathbf{A}}^{(2)}, \cdots, \mathbf{Q}_{\mathbf{A}}^{(T)}\big]$, where $(\mathbf{Q}_{\mathbf{A}}^{(t)})_{ij}$ denotes the probability of transitioning from state $i$ to state $j$ for an edge at time step $t$. 
The forward Markov diffusion process is defined as $q\big(\mathbf{A}^{(t)}|\mathbf{A}^{(0)}\big)=\mathbf{A}^{(0)}\mathbf{Q}_{\mathbf{\mathbf{A}}}^{(1)}\cdots \mathbf{Q}_{\mathbf{A}}^{(t-1)}=\mathbf{A}^{(0)}\bar{\mathbf{Q}}^{(t-1)}_{\mathbf{A}}$.
Here we utilize the marginal distributions of the edge state~\cite{vignac2022digress} as the noise prior distribution, thus $\bar{\mathbf{Q}}_{\mathbf{A}}^{(t)}$ can be expressed as $\bar{\mathbf{Q}}_{\mathbf{A}}^{(t)}=\bar{\alpha}^{(t)}\mathbf{I}+\big(1-\bar{\alpha}^{(t)}\big)\mathbf{1} \mathbf{m}_{\mathbf{A}}^{\top}$, where $\mathbf{m}_{\mathbf{A}}$ is the marginal distribution of edge states, $\bar{\alpha}^{(t)}=\cos^2\big(\frac{t/T+s}{1+s}\cdot \frac{\pi}{2}\big)$ follows the cosine schedule~\cite{nichol2021improved} with a small constant $s$, $\mathbf{I}$ is the identity matrix, and $\mathbf{1}$ is a vector of ones.
During the reverse denoising process, we use the transformer convolution layer~\cite{shi2020masked} as the denoising network $\phi(\cdot)_{\boldsymbol{\theta}}$, trained for one-step denoising $p_{\boldsymbol{\theta}}\big(\mathbf{A}^{(t-1)}|\mathbf{A}^{(t)}, t\big)$. 
We can train the denoising network to predict $\mathbf{A}^{(0)}$ instead of $\mathbf{A}^{(t-1)}$ since 
the posterior $q\big(\mathbf{A}^{(t-1)}|\mathbf{A}^{(t)}, \mathbf{A}^{(0)}, t\big) \propto \mathbf{A}^{(t)}(\mathbf{Q}_{\mathbf{A}}^{(t)})^{\top} \odot \mathbf{A}^{(0)} \bar{\mathbf{Q}}_{\mathbf{A}}^{(t-1)}$
has a closed form expression~\cite{sohl2015deep, song2019generative, li2023graphmaker}, where $\odot$ is the Hadamard product.
Once trained, we can generate graphs by iteratively applying $\phi(\cdot)_{\boldsymbol{\theta}}$.


\subsection{LID-Driven Non-Isotropic Diffusion Mechanism}
\label{method:part2}
Adversarial attacks typically target only a small subset of nodes or edges to fool the GNNs while remaining undetected. 
Injecting isotropic noise uniformly across all nodes, which means applying the same level of noise to each node regardless of its individual characteristics~\cite{voleti2022score}, poses a significant challenge. 
While isotropic noise can effectively drown out adversarial perturbations during the forward diffusion process, it inevitably compromises the clean and unaffected portions of the graph. 
As a result, both the adversarial and the valuable information are erased, making purification during the reverse denoising process more difficult.

To remove the adversarial perturbations without losing valuable information, we design a novel LID-Driven Non-Isotropic Diffusion Mechanism. The core idea is to inject more noise into adversarial nodes identified by Local Intrinsic Dimensionality (LID) while minimizing disruption to clean nodes.
In practice, the noise level associated with different edges is distinct and independent. 
As a result, the noise associated with each edge during the forward diffusion process is represented by an independent transition matrix. The adjacency matrix $\mathbf{A}^{(t)}$ at time step $t$ is then updated as follows:
\begin{align}
    &\mathbf{A}^{(t)}_{ij} =\mathbf{A}_{ij}\big(\bar{\mathbf{Q}}_{\mathbf{A}}^{(t)}\big)_{ij}, \label{non_isotropic_transition_matrx} \\
    &\big(\bar{\mathbf{Q}}_{\mathbf{A}}^{(t)}\big)_{ij} = \bar{\alpha}^{(t)} \mathbf{I} + \big(\mathbf{\Lambda}_{\mathbf{A}}\big)_{ij}\big(1-\bar{\alpha}^{(t)}\big)\mathbf{1} \mathbf{m}_{\mathbf{A}}^{\top},
\end{align}
where $\mathbf{\Lambda}_{\mathbf{A}} \in \mathbb{R}^{N \times N}$ represents the adversarial degree of each edge. 

Based on the above analysis, locating the adversarial information and determining the value of $\mathbf{\Lambda}_{\mathbf{A}}$ is crucial for effective adversarial purification. 
Local Intrinsic Dimensionality (LID)~\cite{houle2017local, ma2018characterizing} measures the complexity of data distributions around a reference point $o$ by assessing how quickly the number of data points increases as the distance from the reference point expands. Let $F(r)$ denote the cumulative distribution function of the distances between the reference point $o$ and other data points at distance $r$ and $F(r)$ is positive and differentiable at $r \geq 0$, the LID of point $o$ at distance $r$ is defined as $\mathop{\lim}_{\epsilon \rightarrow 0} \frac{\ln  F((1 + \epsilon)r) / F(r) }{\ln (1 + \epsilon)}$~\cite{houle2017local}. 
According to the manifold hypothesis~\cite{feinman2017detecting}, each node $n_{i}$ in a graph lies on a low-dimensional natural manifold $S$. 
Adversarial nodes being perturbed will deviate from this natural data manifold $S$, leading to an increase in LID~\cite{ma2018characterizing}, which can quantify the dimensionality of the local data manifold. Therefore, we use LID to measure the adversarial degree, $\mathbf{\Lambda}_{\mathbf{A}}$. Higher LID values indicate that the local manifold around a node has expanded beyond its natural low-dimensional manifold $S$, signaling the presence of adversarial perturbations.
In this work, we use the Maximum Likelihood Estimator (MLE)~\cite{amsaleg2015estimating} to estimate the LID value of graph nodes, providing a useful trade-off between statistical efficiency and computational complexity~\cite{ma2018characterizing}. 
Specifically, let $\mathbf{\Gamma} \in \mathbb{R}^{n}$ represent the vector of estimated LID values, where $\mathbf{\Gamma}_{i}$ denotes the LID value of node $n_{i}$, which is estimated as follows:
\begin{equation}
\label{equation_lid_score}
\mathbf{\Gamma}_{i} = -\left(\frac{1}{k}\sum_{j=1}^{k}\log \frac{r_{j}(n_{i})}{r_{k}(n_{i})}\right)^{-1}.
\end{equation}
Here, $r_{j}(n_{i})$ represents the distance between node $n_{i}$ and its $j$-th nearest neighbor $n_{i}^{j}$. 
Based on the observation that the deeper layers of a neural network reveal more linear and ``unwrapped'' manifolds compared to the input space~\cite{gal2016dropout}, we compute the $r_{j}(n_{i})$ as the Euclidean distance~\cite{dokmanic2015euclidean} between the hidden features of two nodes in the last hidden layer of the trained GNN classifier $c(\cdot)_{\boldsymbol{\theta}}$.
After obtaining the LID values vector $\mathbf{\Gamma}$, we can calculate $\mathbf{\Lambda}_{\mathbf{A}} = \mathbf{\Gamma} \mathbf{\Gamma}^{\top}$. 

However, in practice, using the non-isotropic transition matrix in Eq.~\eqref{non_isotropic_transition_matrx}   requires the diffusion model to predict the previously injected non-isotropic noise during the reverse process. This task is more challenging because, unlike isotropic noise, non-isotropic noise varies across different edges. As a result, the model must learn to predict various noise distributions that are both spatially and contextually dependent on the graph structure and node features. This increases the difficulty of accurately estimating and removing the noise across graph regions, making the reverse denoising process significantly more intricate. 
Moreover, training the model to develop the ability to inject more noise into adversarial perturbations and remove it during the reverse process relies on having access to adversarial training data.
However, in the evasion attack settings, where the model lacks access to adversarial graphs during training, its ability to achieve precise non-isotropic denoising is limited.
Inspired by ~\cite{yu2024constructing}, we introduce the following proposition:

\begin{proposition}
\label{equivalence}
    For each edge at time $t$, the adjacency matrix is updated as $\mathbf{A}^{(t)}_{ij} =\mathbf{A}_{ij}\big(\bar{\mathbf{Q}}^{\prime(t)}_{\mathbf{A}}\big)_{ij}$, where the non-isotropic transition matrix is  $\big(\bar{\mathbf{Q}}_{\mathbf{A}}^{\prime(t)}\big)_{ij} = \bar{\alpha}^{(t)} \mathbf{I} + (\boldsymbol{\Lambda}_{\mathbf{A}})_{ij}(1-\bar{\alpha})\mathbf{1} \mathbf{m}_{\mathbf{A}}^{T}$. There exists a unique time $\hat{t}\big(\mathbf{A}_{ij}\big)\in [0, T]$ such that $\big(\bar{\mathbf{Q}^\prime}_{\mathbf{A}}^{(t)}\big)_{ij}\Leftrightarrow \big(\bar{\mathbf{Q}}_{\mathbf{A}}^{\hat{t}(\mathbf{A}_{ij})}\big)_{ij}$, where:
    \begin{equation}
        \label{equation_purification_time}
        \hat{t}\big(\mathbf{A}_{ij}\big)\!=\! T\!\left(\frac{2(1\!+\!s)}{\pi} \cos^{-1}\! \left(\sqrt{\frac{\bar{\alpha}^{(t)}}{\big[\boldsymbol{\Lambda}(\mathbf{A})_{ij} (1-\bar{\alpha}^{(t)}) + \bar{\alpha}^{(t)}\big]}}\right)\!-\!s\right).
    \end{equation}
\end{proposition}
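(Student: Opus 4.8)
The plan is to read the symbol $\Leftrightarrow$ as equality of the two per-edge forward marginals once each is normalized into a genuine categorical transition over edge states. The key observation is that, for a fixed edge $(i,j)$ with $\lambda := (\boldsymbol{\Lambda}_{\mathbf{A}})_{ij}$ a scalar, the non-isotropic cumulative matrix $\big(\bar{\mathbf{Q}}_{\mathbf{A}}^{\prime(t)}\big)_{ij} = \bar{\alpha}^{(t)}\mathbf{I} + \lambda(1-\bar{\alpha}^{(t)})\mathbf{1}\mathbf{m}_{\mathbf{A}}^{\top}$ is not row-stochastic: applied to the one-hot state $\mathbf{A}^{(0)}_{ij}$ it produces a vector whose entries sum to $\bar{\alpha}^{(t)} + \lambda(1-\bar{\alpha}^{(t)})$ rather than to $1$. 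First I would divide by this row mass to recover a bona fide categorical distribution, which collapses back to the isotropic template $\alpha'\mathbf{I} + (1-\alpha')\mathbf{1}\mathbf{m}_{\mathbf{A}}^{\top}$ with effective signal-retention coefficient
\[
\alpha' \;=\; \frac{\bar{\alpha}^{(t)}}{\bar{\alpha}^{(t)} + \lambda\big(1-\bar{\alpha}^{(t)}\big)}.
\]
Thus a non-isotropic step at time $t$ is, up to normalization, an ordinary isotropic step whose retention coefficient is $\alpha'$, and the whole claim reduces to finding the time $\hat{t}$ at which the cosine schedule attains exactly this value.

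Next I would impose $\bar{\alpha}^{(\hat{t})} = \alpha'$, i.e. $\cos^2\big(\tfrac{\hat{t}/T+s}{1+s}\cdot\tfrac{\pi}{2}\big) = \alpha'$, and invert. Because $\hat{t}\in[0,T]$ forces the schedule argument into $[\tfrac{s}{1+s}\tfrac{\pi}{2},\tfrac{\pi}{2}]\subset[0,\tfrac{\pi}{2}]$, the cosine there is non-negative, so taking the positive square root and then applying $\cos^{-1}$ is unambiguous; solving the resulting linear equation for $\hat{t}$ reproduces Eq.~\eqref{equation_purification_time} verbatim. This is a short routine computation once $\alpha'$ is in hand, so I would not belabor it.

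For uniqueness I would note that on $[\tfrac{s}{1+s}\tfrac{\pi}{2},\tfrac{\pi}{2}]$ the map $t\mapsto \bar{\alpha}^{(t)}=\cos^2(\cdots)$ is strictly decreasing, hence injective, because the schedule argument is strictly increasing in $t$ and $\cos^2$ is strictly decreasing on $[0,\tfrac{\pi}{2}]$. Consequently $\bar{\alpha}^{(\hat{t})}=\alpha'$ admits at most one solution in $[0,T]$, giving uniqueness for free. Existence then amounts to verifying that $\alpha'$ lands in the range $[\bar{\alpha}^{(T)},\bar{\alpha}^{(0)}]=[0,\cos^2(\tfrac{s}{1+s}\tfrac{\pi}{2})]$ of the schedule; since $\lambda\ge 0$ and $\bar{\alpha}^{(t)}\in[0,1]$ we immediately have $\alpha'\in[0,1]$, and in the adversarial regime $\lambda\ge 1$ one gets $\alpha'\le\bar{\alpha}^{(t)}\le\bar{\alpha}^{(0)}$, placing $\hat{t}$ safely in $[0,T]$.

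The main obstacle I anticipate is not the algebra but pinning down the exact meaning of $\Leftrightarrow$ and making the normalization step rigorous: I must argue that dividing by the row mass is the legitimate way to compare the non-isotropic and isotropic processes — for instance, that the closed-form posterior of Section~\ref{method:part1} and the trained one-step denoiser depend only on the normalized marginal — so that the equivalence is genuinely an equivalence of diffusion dynamics and not merely of unnormalized matrices. I would also guard the boundary cases $\bar{\alpha}^{(t)}\to 1$ or $\lambda\to 0$, where $\alpha'\to 1$ could drive $\hat{t}$ below $0$ and require a clamp to enforce $\hat{t}\in[0,T]$. Establishing this range condition cleanly, rather than the inversion itself, is where the care is needed.
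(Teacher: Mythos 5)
Your proposal is correct and follows essentially the same route as the paper: both reduce the claimed equivalence to matching an effective signal-retention coefficient --- your normalized $\alpha' = \bar{\alpha}^{(t)}/\big(\bar{\alpha}^{(t)}+\lambda(1-\bar{\alpha}^{(t)})\big)$ is exactly the $\bar{\alpha}^{(\hat{t})}$ the paper obtains by equating the signal-to-noise ratios $\Lambda(1-\bar{\alpha})/\bar{\alpha}$ of the two transition matrices --- and then invert the cosine schedule, with uniqueness following from its strict monotonicity on $[0,T]$. If anything, your row-mass normalization is a cleaner justification of the equivalence criterion than the paper's bare SNR-matching, and the boundary issue you flag (for small $\lambda$ one can have $\alpha' > \bar{\alpha}^{(0)}$, pushing $\hat{t}$ below $0$ and requiring a clamp) is a genuine gap that the paper's own sign analysis of $g(0)$ and $g(T)$ glosses over.
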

This proposition demonstrates that non-isotropic noise can be mapped to isotropic noise by adjusting the diffusion times accordingly. The detailed proof is provided in Appendix~\ref{appendix:proof}. Building on this proposition, we bypass the need to train a diffusion model that can predict non-isotropic noise in the reverse denoising process. 
Instead, we handle the need for non-isotropic noise injection by applying isotropic noise uniformly to all edges, while varying the total diffusion time for each edge. 
By controlling the diffusion time for each edge, we can effectively manage the noise introduced to each node, ensuring that the injected noise accounts for the adversarial degree of each node. 
Let $\hat{\mathbf{A}}^{(t)\prime}$ represents the adjacency matrix at time $t$ during the reverse denoising process, we have:
\begin{equation}
\label{equation_non_isotropic_purification}
\hat{\mathbf{A}}^{(t)\prime} = \mathbf{M}^{(t)} \odot \hat{\mathbf{A}}^{(t)} + \big(1-\mathbf{M}^{(t)}\big)\odot \mathbf{A}^{(t)},
\end{equation}
where $\hat{\mathbf{A}}^{(t)}$ is the adjacency matrix predicted by $\phi(\cdot)_{\boldsymbol{\theta}}$, $\mathbf{A}^{(t)}$ is the noisy adjacency matrix obtained by $\mathbf{A}^{(t)} = \mathbf{A}\bar{\mathbf{Q}}_{\mathbf{A}}^{(t)}$ in the forward diffusion process, and $\mathbf{M}^{(t)}$ is the binary mask matrix that indicates which edges are being activated to undergo purification at time step $t$, achieving the non-isotropic diffusion. $\mathbf{M}^{(t)}_{ij}$ is defined as:
\begin{equation}
\label{equation_mask}
\mathbf{M}^{(t)}_{ij} = \left \{
\begin{aligned}
    &0,  &t > \hat{t}\big(\mathbf{A}_{ij}\big)\\
    &1,  &t \leq \hat{t}\big(\mathbf{A}_{ij}\big)
\end{aligned}
\right. ,
\end{equation}
where $\hat{t}\big(\mathbf{A}_{ij}\big)$ is obtained according to Proposition~\ref{equivalence}. This implies that clean nodes are not denoised until the specified time. In this way, adversarial information receives sufficient denoising, while valuable information is not subjected to excessive perturbations.

\subsection{Graph Transfer Entropy Guided Denoising Mechanism}
\label{method:part3}
In structured diffusion models~\cite{austin2021structured}, the reverse process involves multiple rounds of sampling from the distribution, which introduces inherent randomness. 
This randomness is useful for generating diverse graph samples but creates challenges for our purification goal.
During the reverse denoising process, the diversity of diffusion can result in purified graphs that, although free from adversarial attacks and fit the clean distribution, deviate from the target graph and have different ground truth labels. 
This presents a significant challenge: we not only encourage the generated graph to be free from adversarial information but also aim for it to retain the same semantic information as the target clean graph.

To address this challenge, we introduce a Graph Transfer Entropy Guided Denoising Mechanism to minimize the generation uncertainty in the reverse Markov chain $\langle\hat{G}^{(T-1)} \rightarrow \hat{G}^{(T-2)} \rightarrow \dots \rightarrow \hat{G}^{(0)}\rangle$. Transfer entropy~\cite{schreiber2000measuring} is a non-parametric statistic that quantifies the directed transfer of information between random variables. The transfer entropy from $\hat{G}^{(t)}$ to $\hat{G}^{(t-1)}$ in the reverse process by knowing the adversarial graph $G_{\text{adv}}$, can be defined in the form of conditional mutual information~\cite{wyner1978definition}:
\begin{equation}
\label{transfer entropy}
    I\big(\hat{G}^{t-1}; G_{\text{adv}}| \hat{G}_{t}\big) = 
    H\big(\hat{G}^{(t-1)}|\hat{G}^{(t)}\big) - H\big(\hat{G}^{(t-1)}|\hat{G}^{(t)}, G_{\text{adv}}\big),
\end{equation}
where $I(\cdot)$ represents mutual information and $H(\cdot)$ is the Shannon entropy.
This measures the uncertainty reduced about future value $\hat{G}^{(t-1)}$ conditioned on the value $G_{\text{adv}}$, given the knowledge of past values $\hat{G}^{(t)}$.
Given the unnoticeable characteristic of adversarial attacks, which typically involve only small perturbations to critical edges without altering the overall semantic information of most nodes, the target clean graph has only minimal differences from $G_{\text{adv}}$.
Therefore, by increasing the $I\big(\hat{G}^{t-1}; G_{\text{adv}}| \hat{G}_{t}\big)$, we can mitigate the negative impacts of generative diversity on our goal and guide the direction of the denoising process, ensuring that the generation towards the target clean graph. Specifically, the purified graph will not only be free from adversarial attacks but will also share the same semantic information as the target clean graph. However, calculating Eq.~\eqref{transfer entropy} requires estimating both the entropy and joint entropy of graph data, which remains an open problem.

In this work, we propose a novel method for estimating graph entropy and joint entropy.  Let $z_{i}$ be the representations of node $n_{i}$ after message passing. By treating the set $\mathcal{Z}=\{z_{1}, z_{2}, \dots, z_{n}\}$ as a collection of variables that capture both feature and structure information of the graph, we approximate it as containing the essential information of the graph. From this perspective, the entropy of the graph can be estimated using matrix-based R\'{e}nyi $\alpha$-order entropy~\cite{yu2019multivariate}, which provides an insightful approach to calculating the graph entropy. 
Specifically, let $\mathbf{K}$ denote the Gram matrix obtained from evaluating a positive definite kernel $k$ on all pairs of $z$ with $\mathbf{K}_{ij}=\exp\Big(-\frac{\|z_{i}-z_{j}\|^{2}}{2\sigma ^{2}}\Big)$, where $\sigma$ is a hyperparameter selected follows the Silverman’s rule~\cite{silverman2018density}, the graph entropy can then be defined as the R\'{e}nyi’s $\alpha$-order entropy $S_{\alpha}(\cdot)$~\cite{yu2019multivariate}:
\begingroup
\setlength{\abovedisplayskip}{0.8\abovedisplayskip}
\setlength{\belowdisplayskip}{0.8\belowdisplayskip}
\begin{equation}
\label{graph_entropy}
    H(G) 
    = S_\alpha\big(\hat{\mathbf{K}}\big)
    = \frac{1}{1-\alpha}\log\left[\sum_{1}^{n}\lambda_{i}^{\alpha}\big(\hat{\mathbf{K}}\big)\right],
\end{equation}
\endgroup
where $\hat{\mathbf{K}}_{ij}=\frac{1}{n}\frac{\mathbf{K}_{ij}}{\sqrt{\mathbf{K}_{ii}\mathbf{K}_{jj}}}$, $\lambda_{i}\big(\hat{\mathbf{K}}\big)$ denotes the $i$-th eigenvalue of $\hat{\mathbf{K}}$, and $\alpha$ is a task-dependent parameter~\cite{yu2019multivariate}. In the context of graph learning, Eq.~\eqref{graph_entropy} captures the characteristics of the graph's community structure: lower graph entropy signifies a more cohesive and well-defined community structure, whereas higher graph entropy indicates a more disordered and irregular arrangement. Further details can be found in Appendix~\ref{appendix:understanding_entropy}. For a collection of $m$ graphs with their node representations after message passing $\big\{\mathcal{Z}_{i}=\big(z_{1}^{i}, z_{2}^{i},\cdots, z_{n}^{i}\big)\big\}_{i=1}^{m}$, the joint graph entropy is defined as~\cite{yu2019multivariate}:
\begin{equation}
\label{graph_joint_entropy}
    H(G_{1}, G_{2}, \cdots, G_{m})=S_\alpha\left(\frac{\hat{\mathbf{K}}_{1}\odot \hat{\mathbf{K}}_{2} \odot \cdots \odot \hat{\mathbf{K}}_{m}}{\operatorname{tr}\big(\hat{\mathbf{K}}_{1}\odot \hat{\mathbf{K}}_{2}  \odot \cdots \odot \hat{\mathbf{K}}_{m}\big)}\right),
\end{equation}
where $\hat{\mathbf{K}}_{i}$ is the normalized Gram matrix of $G_{i}$, $\odot$ represents the Hadamard product, and $\operatorname{tr}(\cdot)$ is the matrix trace. 
Further understanding of our calculation method can be found in Appendix~\ref{appendix:understanding_entropy}.

By combining Eq. (\ref{graph_entropy}) and Eq. (\ref{graph_joint_entropy}), we can get the value of transfer entropy $I\big(\hat{G}^{(t-1)}; G_{\text{adv}} | \hat{G}^{(t)}\big)$. The detailed derivation process is provided in Appendix~\ref{appendix:derivation}. Intuitively, based on our entropy estimation method, maximizing $I\big(\hat{G}^{(t-1)}; G_{\text{adv}} | \hat{G}^{(t)}\big)$ will guide the node entanglement of the generated $\hat{G}^{(t-1)}$ towards that of $G_{\text{adv}}$, preventing the reverse denoising process from deviating from the target direction. To achieve this, we update the generation process using the negative gradient of $I\big(\hat{G}^{(t-1)}; G_{\text{adv}} | \hat{G}^{(t)}\big)$ concerning $\hat{\mathbf{A}}^{(t-1)}$:
\begin{equation}
\label{eq:intial_guide}
    \hat{\mathbf{A}}^{(t-1)} \leftarrow \hat{\mathbf{A}}^{(t-1)} + \lambda \nabla_{\hat{\mathbf{A}}^{(t-1)}} I\big(\hat{G}^{(t-1)}; G_{\text{adv}} | \hat{G}^{(t)}\big),
\end{equation}
where $\lambda$ is a hyperparameter controlling the guidance scale. Early in the denoising process, maximizing the $I\big(\hat{G}^{(t-1)}; G_{\text{adv}} | \hat{G}^{(t)}\big)$ will steer the overall direction of the generation toward better purification. However, as the graph becomes progressively cleaner, maintaining the same level of guidance could cause the re-emergence of adversarial information in the generated graph. Therefore, it is essential to adjust the guidance scale dynamically over time. 
We propose that the scale of guidance should depend on the ratio between the injected noise and the adversarial perturbation at each time step. 
We update the guidance process in Eq.~\eqref{eq:intial_guide} as follows:
\begin{equation}
\label{equation_guide_generation}
    \hat{\mathbf{A}}^{(t-1)} \leftarrow \hat{\mathbf{A}}^{(t-1)} - \frac{\lambda}{1-\bar{\alpha}} \nabla_{\hat{\mathbf{A}}^{(t-1)}} I\big(\hat{G}^{(t-1)}; G_{\text{adv}} | \hat{G}^{(t)}\big).
\end{equation}


\subsection{Training Pipeline of \ModelName}
Under the adversarial evasion structural attacks, we train the proposed \ModelName~along with the classifier using the overall objective loss function $\mathcal{L}=\mathcal{L}_{\text{cls}}+\mathcal{L}_{\text{diff}}$, where:
\begin{align}
    &\mathcal{L}_{\text{cls}}=\text{cross-entropy}\big(\hat{y}, y\big),\\
    &\mathcal{L}_{\text{diff}}=\mathbb{E}_{q\big(\mathbf{A}^{(0)}\big)}\mathbb{E}_{q\big(\mathbf{A}^{t}|\mathbf{A}^{(0)}\big)}\big[-\log p_{\boldsymbol{\theta}}\big(\mathbf{A}^{(0)}|\mathbf{A}^{(t)}, t\big)\big].
\end{align}
The classifier loss $\mathcal{L}_{\text{cls}}$ measures the difference between the predicted label $\hat{y}$ and the ground truth $y$. The graph diffusion model loss $\mathcal{L}_{\text{diff}}$ accounts for the reverse denoising process~\cite{austin2021structured}. Initially, we train the classifier, followed by the independent training of the diffusion model. Once both models are trained, they are used together to purify adversarial graphs. The training pipeline of \ModelName~is detailed in Algorithm~\ref{algorithm:purification}, and complexity analysis is in Appendix~\ref{appendix:complexity}. 

\begin{table*}[!tp]
  \centering
  \captionsetup{skip=6pt}
  \caption{Accuracy score (\% ± standard deviation) of \textit{graph classification} task on real-world datasets against adversarial attacks. The best results are shown in \textbf{bold} type and the runner-ups are \underline{underlined}. OOM indicates out-of-memory.}
  \label{table:graph_classification}
  \tabcolsep=0.1cm
  \resizebox{\linewidth}{!}{ 
    \begin{tabular}{c|c|cccccccccccc|c}
    \toprule
    \textbf{Dataset} & \textbf{Attack} & GCN   & IDGL  & GraphCL & VIB-GSL & G-Mixup & SEP   & MGRL  & SCGCN & HSP-SL & SubGattPool & DIR   & VGIB  & \textbf{DiffSP} \\
    \midrule
    \multirow{4}[0]{*}{\textbf{MUTAG}} & GradArgmax & 54.44\scalebox{0.8}{±4.16} & 47.78\scalebox{0.8}{±5.09} & 55.00\scalebox{0.8}{±3.89} & \underline{69.45\scalebox{0.8}{±2.78}} & 60.00\scalebox{0.8}{±5.44} & 62.78\scalebox{0.8}{±4.34} & 66.67\scalebox{0.8}{±3.51} & 67.22\scalebox{0.8}{±3.89} & 68.33\scalebox{0.8}{±3.51} & 62.78\scalebox{0.8}{±3.56} & 54.44\scalebox{0.8}{±4.16} & 68.52\scalebox{0.8}{±2.62} & \textbf{70.00\scalebox{0.8}{±4.44}} \\
          & PR-BCD & 51.66\scalebox{0.8}{±5.00} & 65.00\scalebox{0.8}{±6.11} & 59.44\scalebox{0.8}{±6.11} & 62.77\scalebox{0.8}{±4.34} & \underline{71.11\scalebox{0.8}{±3.33}} & 55.00\scalebox{0.8}{±9.44} & 56.67\scalebox{0.8}{±4.16} & 63.89\scalebox{0.8}{±2.78} & 65.56\scalebox{0.8}{±2.22} & 65.56\scalebox{0.8}{±5.98} & 52.77\scalebox{0.8}{±6.21} & 57.78\scalebox{0.8}{±3.68} & \textbf{72.77\scalebox{0.8}{±6.31}} \\
          & CAMA & 40.56\scalebox{0.8}{±2.54} & \textbf{73.89\scalebox{0.8}{±2.55}} & 44.26\scalebox{0.8}{±3.80} & 59.34\scalebox{0.8}{±3.52} & 60.00\scalebox{0.8}{±2.22} & 60.56\scalebox{0.8}{±1.67} & 39.45\scalebox{0.8}{±1.67} & 64.45\scalebox{0.8}{±7.11} & 43.33\scalebox{0.8}{±2.22} & 66.11\scalebox{0.8}{±5.80} & 62.22\scalebox{0.8}{±8.17} & 61.67\scalebox{0.8}{±1.67} & \underline{68.33\scalebox{0.8}{±9.31}} \\
    \multirow{-4}[0]{*}{\cellcolor{white}} & \cellcolor{gray!20}{Average} & \cellcolor{gray!20}{48.89} & \cellcolor{gray!20}{62.22} & \cellcolor{gray!20}{52.90} & \cellcolor{gray!20}{64.04} & \cellcolor{gray!20}{63.70} & \cellcolor{gray!20}{59.45} & \cellcolor{gray!20}{54.26} & \cellcolor{gray!20}{65.19} & \cellcolor{gray!20}{59.07} & \cellcolor{gray!20}\underline{64.82} & \cellcolor{gray!20}{56.48} & \cellcolor{gray!20}{62.66} & \cellcolor{gray!20}\textbf{70.18} \\
    \midrule
    \multirow{4}[0]{*}{\textbf{IMDB-B}} & GradArgmax & 62.79\scalebox{0.8}{±1.08} & 59.20\scalebox{0.8}{±1.08} & 65.19\scalebox{0.8}{±0.87} & 68.90\scalebox{0.8}{±1.45} & 50.89\scalebox{0.8}{±0.20} & \underline{72.00\scalebox{0.8}{±1.55}} & 64.00\scalebox{0.8}{±0.77} & 68.60\scalebox{0.8}{±1.50} & 62.50\scalebox{0.8}{±0.80} & 61.00\scalebox{0.8}{±1.10} & 68.10\scalebox{0.8}{±1.04} & 63.80\scalebox{0.8}{±0.87} & \textbf{76.00\scalebox{0.8}{±1.15}} \\
          & PR-BCD & 50.89\scalebox{0.8}{±1.92} & \underline{71.39\scalebox{0.8}{±1.91}} & 65.69\scalebox{0.8}{±1.35} & 70.49\scalebox{0.8}{±1.20} & 41.90\scalebox{0.8}{±0.94} & 70.40\scalebox{0.8}{±1.28} & 57.10\scalebox{0.8}{±1.37} & 66.80\scalebox{0.8}{±1.89} & 67.59\scalebox{0.8}{±1.28} & 69.69\scalebox{0.8}{±2.00} & 67.20\scalebox{0.8}{±1.08} & 65.10\scalebox{0.8}{±1.51} & \textbf{74.10\scalebox{0.8}{±1.22}} \\
          & CAMA & 52.19\scalebox{0.8}{±1.33} & 68.40\scalebox{0.8}{±0.66} & 59.19\scalebox{0.8}{±0.75} & 64.50\scalebox{0.8}{±1.20} & 57.40\scalebox{0.8}{±0.48} & \underline{69.20\scalebox{0.8}{±0.98}} & 54.30\scalebox{0.8}{±1.35} & 67.60\scalebox{0.8}{±1.50} & 55.99\scalebox{0.8}{±1.41} & 67.60\scalebox{0.8}{±2.11} & 61.10\scalebox{0.8}{±1.45} & 56.70\scalebox{0.8}{±1.62} & \textbf{75.90\scalebox{0.8}{±0.99}} \\
    \multirow{-4}[0]{*}{\cellcolor{white}} & \cellcolor{gray!20}{Average} & \cellcolor{gray!20}{55.29}  & \cellcolor{gray!20}{66.33}  & \cellcolor{gray!20}{63.36}  & \cellcolor{gray!20}{67.96}  & \cellcolor{gray!20}{50.06}  & \cellcolor{gray!20}\underline{70.53}  & \cellcolor{gray!20}{58.47}  & \cellcolor{gray!20}{67.67}  & \cellcolor{gray!20}{62.03}  & \cellcolor{gray!20}{66.10}  & \cellcolor{gray!20}{65.47}  & \cellcolor{gray!20}{61.87}  & \cellcolor{gray!20}\textbf{75.33} \\
    \midrule
    \multirow{4}[0]{*}{\textbf{IMDB-M}} & GradArgmax & 38.53\scalebox{0.8}{±2.00} & 46.07\scalebox{0.8}{±0.76} & 40.18\scalebox{0.8}{±3.63} & 44.20\scalebox{0.8}{±1.16} & 39.26\scalebox{0.8}{±0.47} & 42.07\scalebox{0.8}{±0.70} & 42.53\scalebox{0.8}{±1.68} & 45.60\scalebox{0.8}{±1.87} & 41.33\scalebox{0.8}{±0.42} & \underline{47.43\scalebox{0.8}{±0.79}} & 38.20\scalebox{0.8}{±0.67} & 44.40\scalebox{0.8}{±0.94} & \textbf{48.47\scalebox{0.8}{±1.12}} \\
          & PR-BCD & 35.00\scalebox{0.8}{±1.31} & \underline{46.00\scalebox{0.8}{±1.46}} & 43.53\scalebox{0.8}{±1.12} & 45.60\scalebox{0.8}{±1.69} & 36.11\scalebox{0.8}{±0.63} & 35.27\scalebox{0.8}{±0.70} & 38.07\scalebox{0.8}{±2.24} & 42.47\scalebox{0.8}{±1.66} & 37.13\scalebox{0.8}{±0.43} & 38.97\scalebox{0.8}{±1.64} & 37.33\scalebox{0.8}{±0.79} & 43.11\scalebox{0.8}{±1.75} & \textbf{47.00\scalebox{0.8}{±1.44}} \\
          & CAMA & 38.40\scalebox{0.8}{±1.69} & \underline{46.27\scalebox{0.8}{±0.33}} & 42.80\scalebox{0.8}{±0.88} & 46.00\scalebox{0.8}{±0.94} & 37.99\scalebox{0.8}{±1.69} & 44.47\scalebox{0.8}{±0.99} & 41.00\scalebox{0.8}{±1.50} & 45.67\scalebox{0.8}{±2.12} & 41.13\scalebox{0.8}{±1.23} & 43.56\scalebox{0.8}{±0.60} & 39.73\scalebox{0.8}{±1.74} & 38.87\scalebox{0.8}{±1.46} & \textbf{48.13\scalebox{0.8}{±2.44}} \\
    \multirow{-4}[0]{*}{\cellcolor{white}} & \cellcolor{gray!20}{Average} & \cellcolor{gray!20}{37.31}  & \cellcolor{gray!20}\underline{46.11}  & \cellcolor{gray!20}{42.17}  & \cellcolor{gray!20}{45.27}  & \cellcolor{gray!20}{37.79}  & \cellcolor{gray!20}{40.60}  & \cellcolor{gray!20}{40.53}  & \cellcolor{gray!20}{44.58}  & \cellcolor{gray!20}{39.86}  & \cellcolor{gray!20}{43.32}  & \cellcolor{gray!20}{38.42}  & \cellcolor{gray!20}{42.13}  & \cellcolor{gray!20}\textbf{47.87} \\
    \midrule
    \multirow{4}[0]{*}{\textbf{REDDIT-B}} & GradArgmax & 40.24\scalebox{0.8}{±0.51} & \multirow{3}[1]{*}{OOM} & 55.16\scalebox{0.8}{±0.87} & 52.25\scalebox{0.8}{±0.51} & 40.84\scalebox{0.8}{±0.22} & \underline{66.95\scalebox{0.8}{±2.70}} & 66.40\scalebox{0.8}{±0.49} & 64.40\scalebox{0.8}{±1.88} & 62.90\scalebox{0.8}{±0.76} & 59.80\scalebox{0.8}{±0.78} & 54.00\scalebox{0.8}{±0.32} & 57.35\scalebox{0.8}{±0.74} & \textbf{67.35\scalebox{0.8}{±0.55}} \\
          & PR-BCD & 51.82\scalebox{0.8}{±1.09} &       & 51.96\scalebox{0.8}{±0.57} & 57.06\scalebox{0.8}{±1.55} & 55.05\scalebox{0.8}{±1.55} & 54.85\scalebox{0.8}{±1.94} & 51.65\scalebox{0.8}{±0.32} & 52.05\scalebox{0.8}{±1.78} & \underline{64.20\scalebox{0.8}{±1.94}} & 66.00\scalebox{0.8}{±3.29} & 56.15\scalebox{0.8}{±1.29} & 54.05\scalebox{0.8}{±0.35} & \textbf{67.63\scalebox{0.8}{±0.42}} \\
          & CAMA & 51.49\scalebox{0.8}{±0.59} &       & 58.84\scalebox{0.8}{±0.95} & 62.65\scalebox{0.8}{±0.90} & 54.95\scalebox{0.8}{±0.57} & 66.50\scalebox{0.8}{±3.02} & 48.10\scalebox{0.8}{±0.92} & 67.85\scalebox{0.8}{±1.90} & \textbf{69.90\scalebox{0.8}{±0.49}} & 53.90\scalebox{0.8}{±0.30} & 60.40\scalebox{0.8}{±0.54} & 55.90\scalebox{0.8}{±1.04} & \underline{68.15\scalebox{0.8}{±0.95}} \\
    \multirow{-4}[0]{*}{\cellcolor{white}} & \cellcolor{gray!20}{Average} & \cellcolor{gray!20}{47.85}  & \cellcolor{gray!20}OOM   & \cellcolor{gray!20}{55.32}  & \cellcolor{gray!20}{57.32}  & \cellcolor{gray!20}{50.28}  & \cellcolor{gray!20}{62.77}  & \cellcolor{gray!20}{55.38}  & \cellcolor{gray!20}{61.43}  & \cellcolor{gray!20}\underline{65.67}  & \cellcolor{gray!20}{59.90}  & \cellcolor{gray!20}{56.85}  & \cellcolor{gray!20}{55.77}  & \cellcolor{gray!20}\textbf{67.71} \\
    \midrule
    \multirow{4}[0]{*}{\textbf{COLLAB}} & GradArgmax & 59.30\scalebox{0.8}{±1.37} & 66.84\scalebox{0.8}{±0.83} & 62.08\scalebox{0.8}{±0.59} & \underline{68.00}\scalebox{0.8}{±0.31} & 51.49\scalebox{0.8}{±0.50} & 62.86\scalebox{0.8}{±1.19} & 52.88\scalebox{0.8}{±0.45} & 54.83\scalebox{0.8}{±1.12} & 58.68\scalebox{0.8}{±0.39} & 62.62\scalebox{0.8}{±0.74} & 62.98\scalebox{0.8}{±0.52} & 61.10\scalebox{0.8}{±1.00} & \textbf{68.08\scalebox{0.8}{±0.78}} \\
          & PR-BCD & 46.74\scalebox{0.8}{±0.70} & \underline{67.00\scalebox{0.8}{±1.13}} & 57.40\scalebox{0.8}{±1.67} & 66.52\scalebox{0.8}{±0.88} & 56.08\scalebox{0.8}{±1.19} & 53.38\scalebox{0.8}{±1.90} & 44.34\scalebox{0.8}{±1.46} & 49.46\scalebox{0.8}{±1.17} & 53.00\scalebox{0.8}{±0.60} & 61.02\scalebox{0.8}{±0.97} & 64.30\scalebox{0.8}{±0.48} & 57.04\scalebox{0.8}{±0.67} & \textbf{67.56\scalebox{0.8}{±0.69}} \\
          & CAMA & 49.70\scalebox{0.8}{±1.04} & \textbf{67.92\scalebox{0.8}{±0.20}} & 62.08\scalebox{0.8}{±0.59} & 66.96\scalebox{0.8}{±0.56} & 48.38\scalebox{0.8}{±0.60} & 60.21\scalebox{0.8}{±1.01} & 54.14\scalebox{0.8}{±0.41} & 54.90\scalebox{0.8}{±1.07} & 56.60\scalebox{0.8}{±0.37} & 56.92\scalebox{0.8}{±0.61} & 62.86\scalebox{0.8}{±0.47} & 59.64\scalebox{0.8}{±0.46} & \underline{67.06\scalebox{0.8}{±0.63}} \\
    \multirow{-4}[0]{*}{\cellcolor{white}} & \cellcolor{gray!20}{Average} & \cellcolor{gray!20}{51.91}  & \cellcolor{gray!20}\underline{67.25}  & \cellcolor{gray!20}{60.52}  & \cellcolor{gray!20}{67.16}  & \cellcolor{gray!20}{51.98}  & \cellcolor{gray!20}{58.82}  & \cellcolor{gray!20}{50.45}  & \cellcolor{gray!20}{53.06}  & \cellcolor{gray!20}{56.09}  & \cellcolor{gray!20}{60.19}  & \cellcolor{gray!20}{63.38}  & \cellcolor{gray!20}{59.26}  & \cellcolor{gray!20}\textbf{67.57} \\
    \bottomrule
\end{tabular}%

}
\vspace{-1em}
\end{table*}

\begin{algorithm}
\caption{Overall training pipeline of \textbf{\ModelName}.}
\label{algorithm:purification}
\KwIn {Evasion attacked graph $G_{\text{adv}}=(\mathbf{X}, \mathbf{A}_{\text{adv}})$;  Classifier $c(\cdot)_{\boldsymbol{{\theta}}}$; Graph diffusion purification model $\phi(\cdot)_{\boldsymbol{\theta}}$; Hyperparameters $T, k, \lambda, \sigma, \alpha, \eta$.}
\KwOut{Purified graph $\hat{G}=(\mathbf{X}, \hat{\mathbf{A}})$; Learned parameter $\hat{\boldsymbol{\theta}}$.}
\BlankLine 
Update by back-propagation $\boldsymbol{\hat{\theta}} \leftarrow \boldsymbol{\hat{\theta}}- \eta \nabla_{\boldsymbol{\hat{\theta}}}\mathcal{L}$ \;
\tcp{LID-Driven Non-Isotropic Diffusion}
Assess node adversarial degree $\mathbf{\Gamma}$ based on LID $\leftarrow$  Eq.~\eqref{equation_lid_score}\;
Calculate the edge adversarial degree $\boldsymbol{\Lambda}_{\mathbf{A}}=\mathbf{\Gamma} \mathbf{\Gamma}^{\top}$\;
Obtain the purification time of each edge $\hat{t}(\mathbf{A}_{ij})\leftarrow$ Eq.~\eqref{equation_purification_time}\;
\For{$t=T,T-1,\cdots,1$}{
    Establish the purification mask $\mathbf{M}^{(t-1)}\leftarrow$ Eq.~\eqref{equation_mask}\;
    Execute one step denoising $\mathbf{\hat{\mathbf{A}}}^{(t-1)}\leftarrow$ Eq.~\eqref{equation_non_isotropic_purification}\;

    \tcp{Graph Transfer Entropy Guided Denoising}
    Calculate the graph transfer entropy $\leftarrow$ Eq.~\eqref{transfer entropy},~\eqref{graph_entropy},~\eqref{graph_joint_entropy}\;
    Guide the reverse denoising process $\leftarrow$ Eq.~\eqref{equation_guide_generation}\;
}
Obtain the $\hat{\textbf{A}}^{(0)}$ as the purified adjacency matrix $\hat{\mathbf{A}}$.
\end{algorithm}

\section{Experiment}
\textbf{Datasets.} 
We assess the robustness of \ModelName\footnote{Our code is available at \url{https://github.com/RingBDStack/DiffSP}} in graph and node classification tasks. 
For graph classification, we use MUTAG~\cite{ivanov2019understanding}, IMDB-BINARY~\cite{ivanov2019understanding}, IMDB-MULTI~\cite{ivanov2019understanding}, REDDIT-BINARY~\cite{ivanov2019understanding}, and COLLAB~\cite{ivanov2019understanding}. For node classification, we test on Cora~\cite{yang2016revisiting}, CiteSeer~\cite{yang2016revisiting}, Polblogs~\cite{adamic2005political}, and Photo~\cite{shchur2018pitfalls}. Details are in Appendix~\ref{appendix:datasets}.

\noindent\textbf{Baselines.} 
Due to the limited research on robust GNNs targeting graph classification, we compare \ModelName\ with robust representation learning and structure learning methods designed for graph classification, including IDGL~\cite{chen2020iterative}, GraphCL~\cite{you2020graph}, VIB-GSL~\cite{sun2022graph}, G-Mixup~\cite{han2022g}, SEP~\cite{wu2022structural}, MGRL~\cite{ma2023multi}, SCGCN~\cite{zhao2024graph}, HSP-SL~\cite{zhang2019hierarchical}, SubGattPool~\cite{bandyopadhyay2020hierarchically} DIR~\cite{wu2022discovering}, and VGIB~\cite{yu2022improving}.
For node classification, we choose baselines from: 1) \textit{Structure Learning Based} methods, including GSR~\cite{zhao2023self}, GARNET~\cite{deng2022garnet}, and GUARD~\cite{li2023guard}; 2) \textit{Preprocessing Based} methods, including SVDGCN~\cite{entezari2020all} and JaccardGCN~\cite{wu2019adversarial}; 3) \textit{Robust Aggregation Based} methods, including RGCN~\cite{zhu2019robust}, Median-GCN~\cite{chen2021understanding}, GNNGuard~\cite{zhang2020gnnguard}, SoftMedian~\cite{geisler2021robustness}, and ElasticGCN~\cite{liu2021elastic}; and 4) \textit{Adversarial Training Based} methods, represented by the GraphADV~\cite{xu2019topology}.
Details of baselines can be found in Appendix~\ref{appendix:baselines}.

\noindent\textbf{Adversarial Attack Settings.}
For graph classification, we evaluate the performance against three strong evasion attacks: PR-BCD~\cite{geisler2021robustness}, GradArgmax~\cite{dai2018adversarial}, and CAMA-subgraph~\cite{wang2023revisiting}. 
For node classification, we evaluate six evasion attacks: 1) \textit{Targeted Attacks}: PR-BCD~\cite{geisler2021robustness}, Nettack~\cite{zugner2018adversarial}, and GR-BCD~\cite{geisler2021robustness}; 2) \textit{Non-targeted Attacks}: MinMax~\cite{li2020deeprobust}, DICE~\cite{zugner2018metalearningu}, and Random~\cite{li2020deeprobust}.
Further details on the attack methods and budget settings are provided in Appendix~\ref{appendix:attacks}.

\noindent\textbf{Hyperparameter Settings.} Details are provided in Appendix~\ref{appendix:implements}.

\begin{table*}[!tp]
  \captionsetup{skip=5pt}
  \centering
  \caption{Accuracy score (\% ± standard deviation) of \textit{node classification} task on real-world datasets against \textit{targeted attack}. 
  }
  \label{table:node_classification_targeted}
  \tabcolsep=0.1cm
  \resizebox{\linewidth}{!}{ 
\begin{tabular}{c|c|cccccccccccc|c}
    \toprule
    \textbf{Dataset} & \textbf{Attack} & GCN   & GSR   & GARNET & GUARD & SVD   & Jaccard & RGCN  & MedianGCN & GNNGuard & SoftMedian & ElasticGCN & GraphAT & \textbf{DiffSP} \\
    \midrule
    \multirow{4}[0]{*}{\textbf{Cora}} & PR-BCD & 55.59\scalebox{0.8}{±1.47} & \underline{74.75\scalebox{0.8}{±0.53}} & 66.80\scalebox{0.8}{±0.46} & 65.71\scalebox{0.8}{±0.79} & 64.66\scalebox{0.8}{±0.35} & 60.49\scalebox{0.8}{±1.00} & 55.91\scalebox{0.8}{±0.65} & 61.77\scalebox{0.8}{±0.68} & 65.14\scalebox{0.8}{±1.07} & 59.36\scalebox{0.8}{±0.63} & 63.86\scalebox{0.8}{±1.38} & 63.74\scalebox{0.8}{±0.99} & \textbf{75.13\scalebox{0.8}{±1.27}} \\
          & Nettack & 49.25\scalebox{0.8}{±5.28} & 67.25\scalebox{0.8}{±5.20} & 62.95\scalebox{0.8}{±4.75} & 52.50\scalebox{0.8}{±4.08} & 70.25\scalebox{0.8}{±0.79} & 56.75\scalebox{0.8}{±2.65} & 47.50\scalebox{0.8}{±1.67} & \underline{76.25\scalebox{0.8}{±5.17}} & 76.00\scalebox{0.8}{±5.03} & 67.50\scalebox{0.8}{±4.25} & 65.25\scalebox{0.8}{±3.22} & 73.50\scalebox{0.8}{±9.14} & \textbf{77.75\scalebox{0.8}{±3.62}} \\
          & GR-BCD & 66.34\scalebox{0.8}{±1.45} & \textbf{78.86\scalebox{0.8}{±0.53}} & 72.35\scalebox{0.8}{±0.91} & 72.08\scalebox{0.8}{±1.23} & 65.34\scalebox{0.8}{±0.72} & 71.88\scalebox{0.8}{±0.76} & 69.74\scalebox{0.8}{±2.08} & 72.90\scalebox{0.8}{±1.06} & 70.45\scalebox{0.8}{±1.20} & 75.52\scalebox{0.8}{±0.86} & 78.44\scalebox{0.8}{±1.42} & 77.06\scalebox{0.8}{±1.24} & \underline{76.83\scalebox{0.8}{±0.65}} \\
          & \cellcolor{gray!20}Average & \cellcolor{gray!20}{57.06} & \cellcolor{gray!20}\underline{73.62} & \cellcolor{gray!20}{67.37} & \cellcolor{gray!20}{63.43} & \cellcolor{gray!20}{66.75} & \cellcolor{gray!20}{63.04} & \cellcolor{gray!20}{57.72} & \cellcolor{gray!20}{70.31} & \cellcolor{gray!20}{70.53} & \cellcolor{gray!20}{67.46} & \cellcolor{gray!20}{69.18} & \cellcolor{gray!20}{71.43} & \cellcolor{gray!20}\textbf{76.57} \\
    \midrule
    \multirow{4}[0]{*}{\textbf{CiteSeer}} & PR-BCD & 45.06\scalebox{0.8}{±1.83} & \underline{63.33\scalebox{0.8}{±0.60}} & 55.75\scalebox{0.8}{±1.71} & 54.48\scalebox{0.8}{±0.96} & 59.61\scalebox{0.8}{±0.51} & 48.72\scalebox{0.8}{±1.20} & 41.08\scalebox{0.8}{±1.55} & 49.72\scalebox{0.8}{±0.71} & 49.78\scalebox{0.8}{±2.33} & 49.20\scalebox{0.8}{±0.89} & 48.79\scalebox{0.8}{±1.41} & 61.54\scalebox{0.8}{±1.01} & \textbf{64.35\scalebox{0.8}{±0.89}} \\
          & Nettack & 60.75\scalebox{0.8}{±8.34} & 75.25\scalebox{0.8}{±2.65} & 72.00\scalebox{0.8}{±2.84} & 59.25\scalebox{0.8}{±3.92} & \underline{77.25\scalebox{0.8}{±1.84}} & 71.50\scalebox{0.8}{±3.16} & 42.25\scalebox{0.8}{±4.78} & 74.00\scalebox{0.8}{±2.93} & 77.00\scalebox{0.8}{±3.50} & 59.00\scalebox{0.8}{±2.11} & 63.50\scalebox{0.8}{±3.76} & 73.25\scalebox{0.8}{±5.14} & \textbf{78.80\scalebox{0.8}{±4.53}} \\
          & GR-BCD & 50.56\scalebox{0.8}{±2.17} & \underline{65.50\scalebox{0.8}{±0.57}} & 57.04\scalebox{0.8}{±2.57} & 54.74\scalebox{0.8}{±1.82} & 60.40\scalebox{0.8}{±0.59} & 59.83\scalebox{0.8}{±1.17} & 44.82\scalebox{0.8}{±1.60} & 55.17\scalebox{0.8}{±1.31} & 58.88\scalebox{0.8}{±3.38} & 55.65\scalebox{0.8}{±0.93} & 60.37\scalebox{0.8}{±2.91} & 62.25\scalebox{0.8}{±1.25} & \textbf{65.63\scalebox{0.8}{±1.30}} \\
          & \cellcolor{gray!20}Average & \cellcolor{gray!20}{52.12} & \cellcolor{gray!20}\underline{68.02} & \cellcolor{gray!20}{61.60} & \cellcolor{gray!20}{56.16} & \cellcolor{gray!20}{65.75} & \cellcolor{gray!20}{60.02} & \cellcolor{gray!20}{42.72} & \cellcolor{gray!20}{59.63} & \cellcolor{gray!20}{61.89} & \cellcolor{gray!20}{54.62} & \cellcolor{gray!20}{57.55} & \cellcolor{gray!20}{65.68} & \cellcolor{gray!20}\textbf{69.59} \\
    \midrule
    \multirow{4}[0]{*}{\textbf{PolBlogs}} & PR-BCD & 73.73\scalebox{0.8}{±1.19} & 86.50\scalebox{0.8}{±0.52} & 75.52\scalebox{0.8}{±0.50} & 81.82\scalebox{0.8}{±1.06} & 78.02\scalebox{0.8}{±0.16} & 51.45\scalebox{0.8}{±1.23} & 74.01\scalebox{0.8}{±0.32} & 65.07\scalebox{0.8}{±4.21} & 51.93\scalebox{0.8}{±2.54} & \underline{87.88\scalebox{0.8}{±1.29}} & 74.71\scalebox{0.8}{±2.89} & 80.67\scalebox{0.8}{±0.85} & \textbf{90.24\scalebox{0.8}{±0.92}} \\
          & Nettack & 74.75\scalebox{0.8}{±4.92} & 75.75\scalebox{0.8}{±1.69} & 83.75\scalebox{0.8}{±3.77} & 76.75\scalebox{0.8}{±3.13} & 80.75\scalebox{0.8}{±1.69} & 47.75\scalebox{0.8}{±6.06} & 76.50\scalebox{0.8}{±1.75} & 46.00\scalebox{0.8}{±2.11} & 50.24\scalebox{0.8}{±6.52} & 83.50\scalebox{0.8}{±3.37} & \textbf{86.00\scalebox{0.8}{±4.12}} & 83.95\scalebox{0.8}{±2.72} & \underline{84.55\scalebox{0.8}{±5.90}} \\
          & GR-BCD & 71.31\scalebox{0.8}{±3.41} & 84.75\scalebox{0.8}{±0.66} & 75.49\scalebox{0.8}{±0.77} & 87.13\scalebox{0.8}{±3.63} & 90.27\scalebox{0.8}{±0.36} & 50.71\scalebox{0.8}{±1.98} & 79.13\scalebox{0.8}{±0.54} & 56.95\scalebox{0.8}{±5.15} & 51.26\scalebox{0.8}{±1.78} & 87.50\scalebox{0.8}{±0.81} & 91.12\scalebox{0.8}{±2.71} & \underline{92.70\scalebox{0.8}{±0.18}} & \textbf{92.75\scalebox{0.8}{±0.38}} \\
          & \cellcolor{gray!20}Average & \cellcolor{gray!20}{73.26} & \cellcolor{gray!20}{82.33} & \cellcolor{gray!20}{78.25} & \cellcolor{gray!20}{81.90} & \cellcolor{gray!20}{83.01} & \cellcolor{gray!20}{49.97} & \cellcolor{gray!20}{76.55} & \cellcolor{gray!20}{56.01} & \cellcolor{gray!20}{51.14} & \cellcolor{gray!20}\underline{86.29} & \cellcolor{gray!20}{83.94} & \cellcolor{gray!20}{85.77} & \cellcolor{gray!20}\textbf{89.18} \\
    \midrule
    \multirow{4}[0]{*}{\textbf{Photo}} & PR-BCD & 65.35\scalebox{0.8}{±2.48} & 73.81\scalebox{0.8}{±1.90} & 77.58\scalebox{0.8}{±1.93} & \underline{84.14\scalebox{0.8}{±3.75}} & 80.04\scalebox{0.8}{±1.13} & 66.13\scalebox{0.8}{±2.82} & 63.79\scalebox{0.8}{±11.99} & 79.75\scalebox{0.8}{±0.96} & 65.62\scalebox{0.8}{±2.63} & 76.84\scalebox{0.8}{±1.46} & 76.21\scalebox{0.8}{±1.89} & 78.72\scalebox{0.8}{±2.13} & \textbf{84.78\scalebox{0.8}{±1.82}} \\
          & Nettack & 83.70\scalebox{0.8}{±5.16} & 83.75\scalebox{0.8}{±4.12} & 88.00\scalebox{0.8}{±3.07} & 84.25\scalebox{0.8}{±2.65} & 82.75\scalebox{0.8}{±5.45} & 84.00\scalebox{0.8}{±5.43} & 75.50\scalebox{0.8}{±3.07} & 86.50\scalebox{0.8}{±3.16} & 87.50\scalebox{0.8}{±5.77} & \textbf{88.75\scalebox{0.8}{±1.32}} & 83.00\scalebox{0.8}{±3.29} & 87.25\scalebox{0.8}{±12.30} & \underline{87.75\scalebox{0.8}{±4.32}} \\
          & GR-BCD & 69.11\scalebox{0.8}{±7.85} & \underline{84.84\scalebox{0.8}{±2.29}} & 85.27\scalebox{0.8}{±1.57} & 82.15\scalebox{0.8}{±2.24} & 83.74\scalebox{0.8}{±1.11} & 76.24\scalebox{0.8}{±2.98} & 68.60\scalebox{0.8}{±7.28} & 84.23\scalebox{0.8}{±1.49} & 79.20\scalebox{0.8}{±1.80} & 79.69\scalebox{0.8}{±1.19} & 83.94\scalebox{0.8}{±0.95} & 87.49\scalebox{0.8}{±1.26} & \textbf{87.58\scalebox{0.8}{±0.58}} \\
          & \cellcolor{gray!20}Average & \cellcolor{gray!20}{72.72} & \cellcolor{gray!20}{80.80} & \cellcolor{gray!20}{83.62} & \cellcolor{gray!20}{83.51} & \cellcolor{gray!20}{82.18} & \cellcolor{gray!20}{75.46} & \cellcolor{gray!20}{69.30} & \cellcolor{gray!20}{83.49} & \cellcolor{gray!20}{77.44} & \cellcolor{gray!20}{81.76} & \cellcolor{gray!20}{81.05} & \cellcolor{gray!20}\underline{84.48} & \cellcolor{gray!20}\textbf{86.70} \\
    \bottomrule
\end{tabular}
}

\vspace{-0.5em}
\end{table*}

\begin{table*}[htbp]
  \centering
  \captionsetup{skip=5pt}
  \caption{Accuracy score (\% ± standard deviation) of \textit{node classification} task on real-world datasets against \textit{non-targeted attack}. 
  }
  \label{table:node_classification_non_targeted}
  \tabcolsep=0.1cm
  \resizebox{\linewidth}{!}{ 
   \begin{tabular}{c|c|cccccccccccc|c}
    \toprule
    \textbf{Dataset} & \textbf{Attack} & GCN   & GSR   & GARNET & GUARD & SVD   & Jaccard & RGCN  & MedianGCN & GNNGuard & SoftMedian & ElasticGCN & GraphAT & \textbf{DiffSP} \\
    \midrule
    \multirow{4}[0]{*}{\textbf{Cora}} & MinMax & 59.91\scalebox{0.8}{±2.60} & 67.80\scalebox{0.8}{±2.18} & 65.68\scalebox{0.8}{±0.58} & 61.62\scalebox{0.8}{±2.85} & 64.75\scalebox{0.8}{±0.96} & 64.43\scalebox{0.8}{±2.48} & 62.49\scalebox{0.8}{±2.19} & 56.35\scalebox{0.8}{±3.34} & 63.63\scalebox{0.8}{±2.40} & \underline{74.53\scalebox{0.8}{±0.70}} & 17.05\scalebox{0.8}{±5.33} & 63.35\scalebox{0.8}{±2.60} & \textbf{75.00\scalebox{0.8}{±1.12}} \\ 
          & DICE  & 69.58\scalebox{0.8}{±2.17} & 74.55\scalebox{0.8}{±0.74} & 68.88\scalebox{0.8}{±1.08} & 71.50\scalebox{0.8}{±2.68} & 59.52\scalebox{0.8}{±0.39} & 71.89\scalebox{0.8}{±0.56} & 69.92\scalebox{0.8}{±0.97} & 71.61\scalebox{0.8}{±0.72} & 68.82\scalebox{0.8}{±0.95} & 73.38\scalebox{0.8}{±0.68} & 74.11\scalebox{0.8}{±1.28} & \underline{75.84\scalebox{0.8}{±0.57}} & \textbf{75.96\scalebox{0.8}{±0.87}} \\ 
          & Random & 70.43\scalebox{0.8}{±2.22} & 77.37\scalebox{0.8}{±0.88} & 75.63\scalebox{0.8}{±0.93} & 74.96\scalebox{0.8}{±0.51} & 62.54\scalebox{0.8}{±0.65} & 73.74\scalebox{0.8}{±0.60} & 72.74\scalebox{0.8}{±1.00} & 74.31\scalebox{0.8}{±0.95} & 68.33\scalebox{0.8}{±1.72} & 77.52\scalebox{0.8}{±0.65} & 74.06\scalebox{0.8}{±3.87} & \underline{77.39\scalebox{0.8}{±0.91}} & \textbf{77.63\scalebox{0.8}{±0.80}} \\ 
          & \cellcolor{gray!20}Average & \cellcolor{gray!20}{66.64}  & \cellcolor{gray!20}\underline{73.24}  & \cellcolor{gray!20}{70.06}  & \cellcolor{gray!20}{69.36}  & \cellcolor{gray!20}{62.27}  & \cellcolor{gray!20}{70.02}  & \cellcolor{gray!20}{68.38}  & \cellcolor{gray!20}{67.42}  & \cellcolor{gray!20}{66.93}  & \cellcolor{gray!20}{75.14}  & \cellcolor{gray!20}{55.07}  & \cellcolor{gray!20}{72.19}  & \cellcolor{gray!20}\textbf{76.20} \\
    \midrule
    \multirow{4}[0]{*}{\textbf{CiteSeer}} & MinMax & 52.07\scalebox{0.8}{±6.63} & 54.74\scalebox{0.8}{±4.92} & 59.00\scalebox{0.8}{±2.35} & 58.02\scalebox{0.8}{±1.44} & 35.83\scalebox{0.8}{±1.89} & 56.65\scalebox{0.8}{±3.81} & 42.85\scalebox{0.8}{±7.72} & 53.39\scalebox{0.8}{±3.44} & 57.98\scalebox{0.8}{±2.97} & 60.84\scalebox{0.8}{±1.40} & 17.05\scalebox{0.8}{±5.33} & \underline{61.54\scalebox{0.8}{±3.70}} & \textbf{61.59\scalebox{0.8}{±1.10}} \\
          & DICE  & 57.46\scalebox{0.8}{±1.63} & 62.48\scalebox{0.8}{±1.08} & 55.59\scalebox{0.8}{±3.01} & 62.19\scalebox{0.8}{±0.99} & 57.33\scalebox{0.8}{±0.49} & 63.00\scalebox{0.8}{±0.87} & 50.88\scalebox{0.8}{±1.59} & 59.95\scalebox{0.8}{±0.97} & 58.85\scalebox{0.8}{±3.22} & 59.85\scalebox{0.8}{±0.81} & 60.30\scalebox{0.8}{±1.46} & \underline{65.28\scalebox{0.8}{±0.81}} & \textbf{65.43\scalebox{0.8}{±0.70}} \\
          & Random & 56.19\scalebox{0.8}{±3.08} & 64.01\scalebox{0.8}{±1.08} & 56.34\scalebox{0.8}{±3.70} & 62.47\scalebox{0.8}{±0.88} & 54.54\scalebox{0.8}{±0.62} & 64.20\scalebox{0.8}{±0.46} & 50.13\scalebox{0.8}{±1.95} & 60.60\scalebox{0.8}{±0.81} & 61.51\scalebox{0.8}{±3.32} & 58.66\scalebox{0.8}{±1.49} & 58.00\scalebox{0.8}{±3.61} & \underline{64.94\scalebox{0.8}{±1.12}} & \textbf{66.78\scalebox{0.8}{±0.54}} \\
          & \cellcolor{gray!20}Average & \cellcolor{gray!20}{55.24}  & \cellcolor{gray!20}{60.41}  & \cellcolor{gray!20}{56.98}  & \cellcolor{gray!20}{60.89}  & \cellcolor{gray!20}{49.23}  & \cellcolor{gray!20}{61.28}  & \cellcolor{gray!20}{47.95}  & \cellcolor{gray!20}{57.98}  & \cellcolor{gray!20}{59.45}  & \cellcolor{gray!20}{59.78}  & \cellcolor{gray!20}{45.12}  & \cellcolor{gray!20}\underline{63.92}  & \cellcolor{gray!20}\textbf{64.60} \\
    \midrule
    \multirow{4}[0]{*}{\textbf{PolBlogs}} & MinMax & 86.96\scalebox{0.8}{±0.43} & 88.56\scalebox{0.8}{±0.82} & 87.85\scalebox{0.8}{±0.19} & \underline{89.51\scalebox{0.8}{±0.85}} & 87.11\scalebox{0.8}{±0.32} & 51.01\scalebox{0.8}{±1.75} & 87.04\scalebox{0.8}{±0.19} & 87.95\scalebox{0.8}{±4.81} & 50.32\scalebox{0.8}{±1.19} & 88.76\scalebox{0.8}{±0.37} & 87.33\scalebox{0.8}{±0.62} & 88.32\scalebox{0.8}{±0.35} & \textbf{89.52\scalebox{0.8}{±3.08}} \\
          & DICE  & 76.52\scalebox{0.8}{±2.76} & 80.75\scalebox{0.8}{±4.72} & 85.05\scalebox{0.8}{±1.01} & 83.76\scalebox{0.8}{±0.78} & 82.84\scalebox{0.8}{±0.20} & 50.27\scalebox{0.8}{±1.91} & 81.50\scalebox{0.8}{±0.44} & 74.19\scalebox{0.8}{±3.02} & 50.79\scalebox{0.8}{±1.59} & 86.47\scalebox{0.8}{±0.45} & 82.40\scalebox{0.8}{±2.24} & \underline{87.39\scalebox{0.8}{±0.44}} & \textbf{88.85\scalebox{0.8}{±1.32}} \\
          & Random & 83.24\scalebox{0.8}{±5.81} & 87.81\scalebox{0.8}{±1.03} & 83.42\scalebox{0.8}{±1.59} & 87.48\scalebox{0.8}{±1.51} & 85.59\scalebox{0.8}{±0.31} & 51.02\scalebox{0.8}{±1.75} & 85.46\scalebox{0.8}{±0.40} & 83.57\scalebox{0.8}{±2.71} & 50.28\scalebox{0.8}{±1.13} & 90.35\scalebox{0.8}{±0.56} & 49.50\scalebox{0.8}{±2.20} & \underline{90.50\scalebox{0.8}{±0.56}} & \textbf{92.61\scalebox{0.8}{±0.93}} \\
          & \cellcolor{gray!20}Average & \cellcolor{gray!20}{82.24}  & \cellcolor{gray!20}{85.71}  & \cellcolor{gray!20}{85.44}  & \cellcolor{gray!20}{86.92}  & \cellcolor{gray!20}{85.18}  & \cellcolor{gray!20}{50.77}  & \cellcolor{gray!20}{84.67}  & \cellcolor{gray!20}{81.90}  & \cellcolor{gray!20}{50.46}  & \cellcolor{gray!20}{88.53}  & \cellcolor{gray!20}{73.08}  & \cellcolor{gray!20}\underline{88.74}  & \cellcolor{gray!20}\textbf{90.33} \\
    \midrule 
    \multirow{4}[0]{*}{\textbf{Photo}} & MinMax & 73.12\scalebox{0.8}{±3.17} & 76.36\scalebox{0.8}{±3.09} & 81.75\scalebox{0.8}{±1.91} & 75.89\scalebox{0.8}{±3.28} & 69.92\scalebox{0.8}{±5.50} & 74.20\scalebox{0.8}{±3.94} & \underline{87.04\scalebox{0.8}{±0.19}} & 67.43\scalebox{0.8}{±4.31} & 71.44\scalebox{0.8}{±6.66} & 85.23\scalebox{0.8}{±2.12} & 8.56\scalebox{0.8}{±3.24} & 81.70\scalebox{0.8}{±2.48} & \textbf{88.51\scalebox{0.8}{±0.61}} \\ 
          & DICE  & 84.60\scalebox{0.8}{±1.17} & 82.52\scalebox{0.8}{±1.66} & \underline{85.43\scalebox{0.8}{±0.92}} & 82.92\scalebox{0.8}{±1.27} & 76.42\scalebox{0.8}{±1.39} & 83.20\scalebox{0.8}{±1.44} & 81.57\scalebox{0.8}{±0.44} & 82.83\scalebox{0.8}{±2.45} & 83.87\scalebox{0.8}{±1.19} & 84.72\scalebox{0.8}{±0.90} & 81.86\scalebox{0.8}{±3.61} & \textbf{87.22\scalebox{0.8}{±1.13}} & 83.52\scalebox{0.8}{±1.19} \\ 
          & Random & 85.38\scalebox{0.8}{±1.76} & 83.62\scalebox{0.8}{±2.91} & 84.12\scalebox{0.8}{±3.95} & 85.49\scalebox{0.8}{±1.55} & 79.13\scalebox{0.8}{±2.84} & 83.37\scalebox{0.8}{±1.93} & \textbf{86.87\scalebox{0.8}{±2.89}} & 84.07\scalebox{0.8}{±2.52} & 83.24\scalebox{0.8}{±4.83} & 85.95\scalebox{0.8}{±1.06} & 75.32\scalebox{0.8}{±2.38} & \underline{86.23\scalebox{0.8}{±2.26}} & 84.60\scalebox{0.8}{±0.46} \\ 
          & \cellcolor{gray!20}Average & \cellcolor{gray!20}{81.03}  & \cellcolor{gray!20}{80.83}  & \cellcolor{gray!20}{83.77}  & \cellcolor{gray!20}{81.43}  & \cellcolor{gray!20}{75.16}  & \cellcolor{gray!20}{80.26}  & \cellcolor{gray!20}{85.16}  & \cellcolor{gray!20}{78.11}  & \cellcolor{gray!20}{79.52}  & \cellcolor{gray!20}\underline{85.30}  & \cellcolor{gray!20}{55.25}  & \cellcolor{gray!20}{85.05}  & \cellcolor{gray!20}\textbf{85.54} \\
    \bottomrule 
    \end{tabular}
}

\end{table*}

\subsection{Adversarial Robustness}
\textbf{Graph Classification Robustness.}
We evaluated the robustness of the graph classification task under three adversarial attacks across five datasets. Since the choice of classifier affects attack effectiveness, especially in graph classification due to pooling operations, it is crucial to standardize the model architecture. Simple changes like adding a linear layer can reduce the impact of attacks. To ensure a fair comparison, we used a two-layer GCN with a linear layer and mean pooling for both the baselines and our proposed \ModelName. Each experiment was repeated 10 times, with results shown in Table~\ref{table:graph_classification}.

\textit{Result.} 1) \ModelName~ consistently outperforms all baselines under the PR-BCD attack and achieves the highest average robustness across all attacks on five datasets, with a notable 4.80\% average improvement on the IMDB-BINARY dataset.
2) It's important to note that while baselines may excel against specific attacks, they often struggle with others. In contrast, \ModelName\ maintains consistent robustness across both datasets and attacks, thanks to its ability to learn clean distributions and purify adversarial graphs without relying on specific priors about the dataset or attack strategies.

\begin{figure*}[!t]
    \begin{minipage}{0.33\textwidth}
        \centering
        \includegraphics[width=\textwidth]{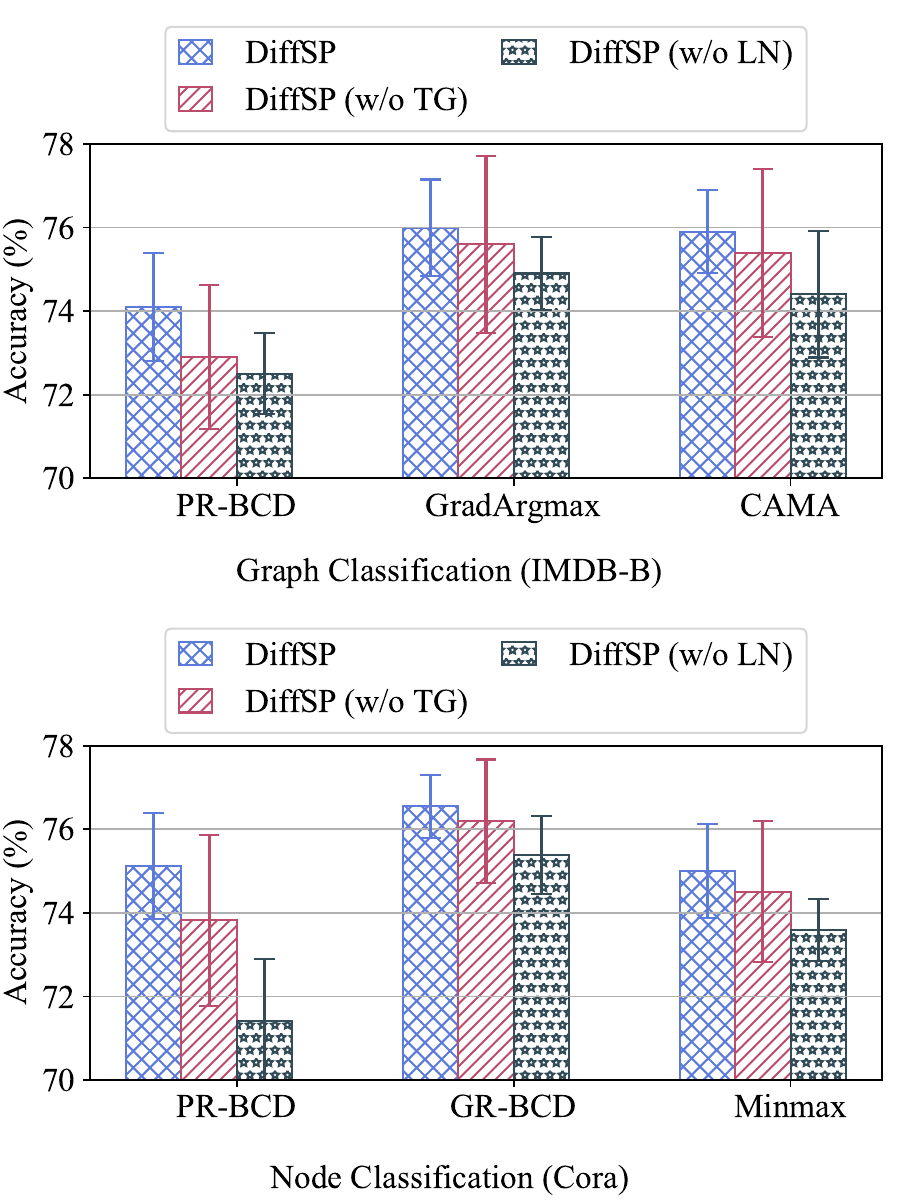}
    \vspace{-2.5em}
        
        \caption{Ablation Study}
        \label{fig_ablation}
    \end{minipage} 
    \begin{minipage}{0.33\textwidth}
    \centering
    \includegraphics[width=\textwidth]{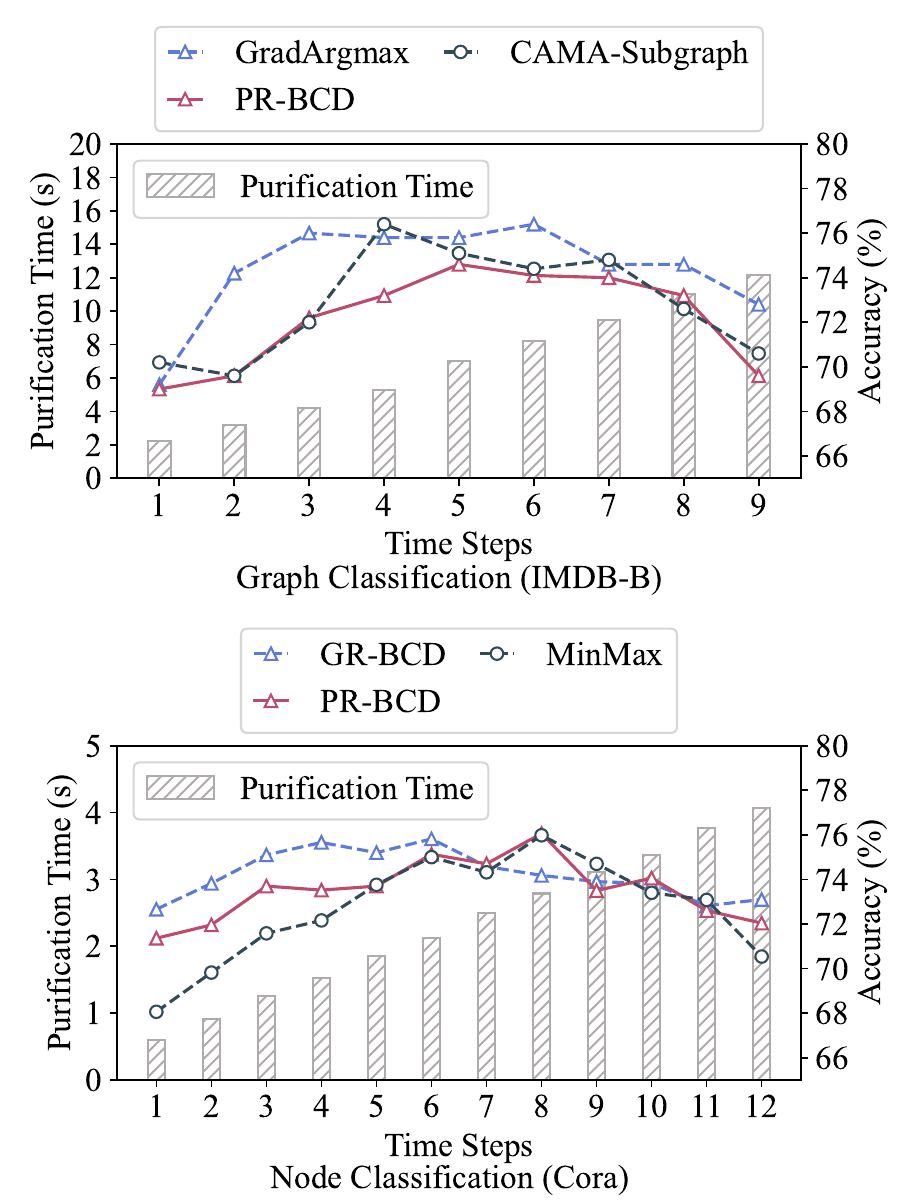}
    \vspace{-2.5em}
    
    \caption{Purification Steps Study}
    \label{fig_diffusion_steps}
\end{minipage}
\begin{minipage}{0.33\textwidth}
    \centering
    \includegraphics[width=\textwidth]{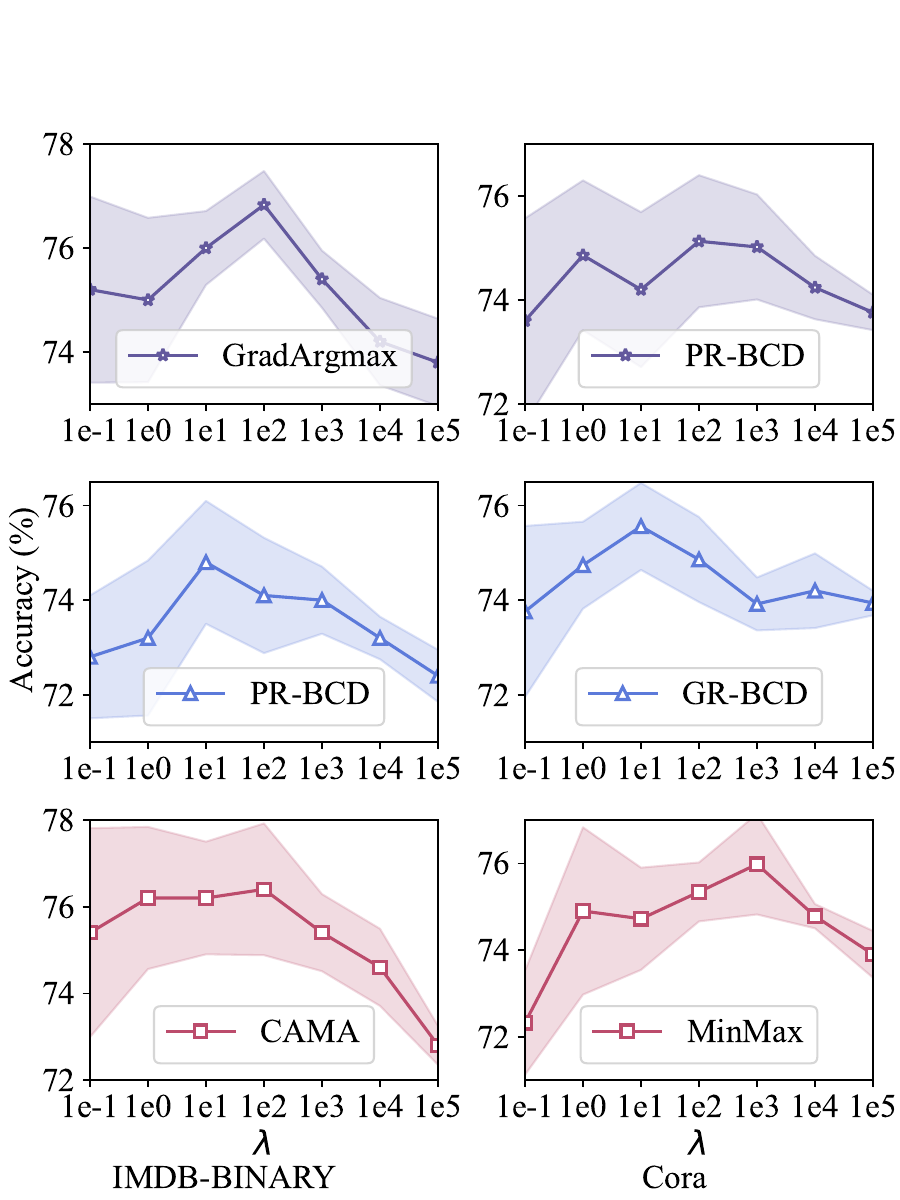}
    \vspace{-2.5em}
    
    \caption{Guide Scale Study}
    \label{fig_graph_transfer_entropy}
\end{minipage}\\
\begin{minipage}{1.0\textwidth}
    \centering
    \includegraphics[width=\textwidth]{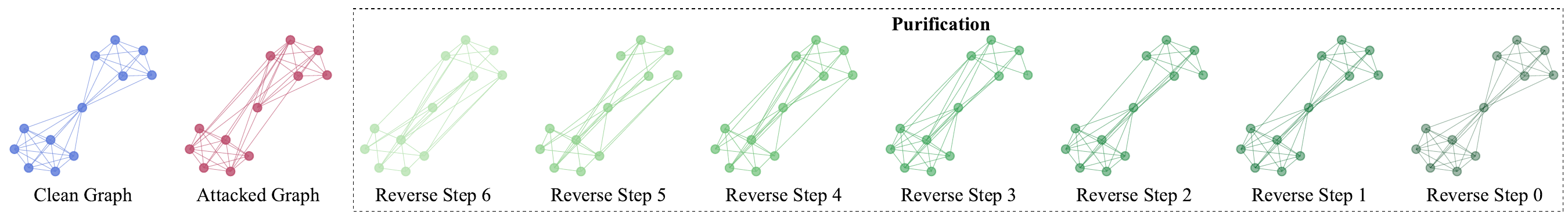}
    \vspace{-2.5em}
    \caption{Visualization Study}
    \label{fig_visualization}
    \vspace{-1em}
\end{minipage}
\end{figure*}

\noindent\textbf{Node Classification Robustness.}
We evaluate the robustness of \ModelName\ on the node classification task against six attacks across four datasets, using the same other settings as in the graph classification experiments. The results are presented in Table \ref{table:node_classification_targeted} and Table \ref{table:node_classification_non_targeted}.

\textit{Result.} We have two key observations: 
1) \ModelName\ achieves the best average performance across both targeted and non-targeted attacks on all datasets, demonstrating its robust adaptability across diverse scenarios. 
2) \ModelName\ performs particularly well under stronger attacks but is less effective against weaker ones like Random and DICE. This is because these attacks introduce numerous noisy edges, many of which do not exhibit distinctly adversarial characteristics. Instead, these edges are often plausible within the graph. Consequently, these additional perturbations can mislead \ModelName, making it harder to discern the correct information within the graph, leading the generated graph to deviate from the target clean graph.


\vspace{-0.5em}

\subsection{Ablation Study}
In this subsection, we analyze the effectiveness of \ModelName's two core components: 1) \ModelName~ (w/o LN), which excludes the LID-Driven Non-Isotropic Diffusion Mechanism; and 2) \ModelName\ (w/o TG), which excludes the Graph Transfer Entropy Guided Denoising Mechanism. We evaluate variants on the IMDB-BINARY and COLLAB datasets under PR-BCD and GradArgmax attacks for graph classification and on the Cora and CiteSeer dataset under PR-BCD and MinMax attacks for node classification. Results are shown in Figure~\ref{fig_ablation}.

\textit{Result.} \ModelName~ consistently outperforms the other variants. \ModelName\ (w/o LN) over-perturbs the valuable parts of the graph leading to degraded performance. Similarly, \ModelName~ (w/o TG) increases the uncertainty of generation, causing deviations from the target clean graph. These reduce the robustness against evasion attacks.

\vspace{-1em}
\subsection{Study on Cross-Dataset Generalization}
We assess \ModelName's generalization ability. The goal is to determine whether \ModelName~ effectively learns the predictive patterns of clean graphs. We train \ModelName\ on IMDB-BINARY and use the trained model to purify graphs on IMDB-MULTI, and vice versa.

\textit{Result.} As shown in Table~\ref{table:transfer}, \ModelName~ trained on different datasets, still demonstrates strong robustness compared to GCN trained and tested on the same dataset. Furthermore, \ModelName~ exhibits only a small performance gap compared to when it is trained and tested on the same dataset directly.
These results highlight \ModelName's ability to learn the underlying clean distribution of a category of data and capture predictive patterns that generalize across diverse datasets.

\begin{table}[t]
  \centering
  \captionsetup{skip=8pt}
  \caption{Accuracy (\% ± standard deviation) across datasets. 
    (B$\rightarrow$B) indicates the model is both trained and tested on IMDB-BINARY, while (M$\rightarrow$B) indicates the model is trained on IMDB-MULTI but tested on IMDB-BINARY.}
  \label{table:transfer}
  \resizebox{\linewidth}{!}{ 
    \begin{tabular}{c|ccc|ccc}
    \toprule
    \textbf{Attack} & \makecell{GCN\\(B$\rightarrow$B)} & \makecell{\ModelName\\(B$\rightarrow$B)} & \makecell{\ModelName\\(M$\rightarrow$B)} & \makecell{GCN\\(B$\rightarrow$B)} & \makecell{\ModelName\\(M$\rightarrow$M)} & \makecell{\ModelName\\(B$\rightarrow$M)} \\
    \midrule
    \textbf{PR-BCD} & 50.90\scalebox{0.8}{±1.92} & 74.10\scalebox{0.8}{±1.29} & 73.90\scalebox{0.8}{±2.02} & 35.00\scalebox{0.8}{±1.31} & 47.00\scalebox{0.8}{±1.44} & 45.33\scalebox{0.8}{±0.99} \\
    \textbf{GradArgmax} & 62.80\scalebox{0.8}{±1.08} & 76.00\scalebox{0.8}{±1.15} & 75.00\scalebox{0.8}{±1.70} & 38.53\scalebox{0.8}{±2.00} & 48.47\scalebox{0.8}{±1.12} & 47.60\scalebox{0.8}{±1.41} \\
    \textbf{CAMA}  & 52.20\scalebox{0.8}{±1.33} & 75.90\scalebox{0.8}{±0.99} & 75.10\scalebox{0.8}{±1.37} & 38.40\scalebox{0.8}{±1.69} & 48.13\scalebox{0.8}{±2.44} & 47.47\scalebox{0.8}{±0.88} \\
    \bottomrule
    \end{tabular}%
}
\vspace{-1.5em}
\end{table}

\subsection{Study on Purification Steps}
We evaluate the performance as the number of diffusion steps varies. For graph classification on the IMDB-BINARY dataset, we adjust the diffusion steps from 1 to 9 under GradArgMax, PR-BCD, and CAMA-Subgraph attacks. For node classification on the Cora dataset, we vary the diffusion steps from 1 to 12 under the GR-BCD, PR-BCD, and MinMax attacks.
The results are shown in Figure~\ref{fig_diffusion_steps}.

\textit{Result.}
We observe that all-time step settings demonstrate the ability to effectively purify adversarial graphs. At smaller time steps, the overall trend shows increasing accuracy as the number of diffusion steps increases. This is likely because fewer time steps do not introduce enough noise to sufficiently suppress the adversarial information in the graph. As the diffusion steps increase, we do not see a significant decline in performance. This stability can be attributed to our LID-Driven Non-Isotropic Diffusion Mechanism, which minimizes over-perturbation of the clean graph parts. Additionally, we found that the time required for purifying increased linearly.

\subsection{Study on Scale of Graph Transfer Entropy}
To analyze the impact of the guidance scale $\lambda$, we vary $\lambda$ from $\text{1e}^{-1}$ to $\text{1e}^{5}$. The results are presented in Figure~\ref{fig_graph_transfer_entropy}. For graph classification, experiments are conducted on the IMDB-BINARY dataset under the GradArgmax, PR-BCD, and CAMA-Subgraph attacks. For node classification, experiments are performed on the Cora dataset under the PR-BCD, GR-BCD, and MinMax attacks.

\textit{Result.} The results show that smaller values of $\lambda$ have minimal effect on accuracy. However, they reduce the stability of the purification during the reverse denoising process, leading to a higher standard deviation. This instability arises because the model is less effective at reducing uncertainty and guiding the generation process when $\lambda$ is too small.  On the other hand, large $\lambda$ values decrease accuracy by overemphasizing guidance, causing the model to reintroduce adversarial information into the generated graph.

\subsection{Graph Purification Visualization}
We visualize snapshots of different purification time steps on the IMDB-BINARY dataset using NetworkX~\cite{hagberg2008exploring}, as shown in Figure~\ref{fig_visualization}. The visualization process demonstrates that \ModelName\ has mastered the ability to generate clean graphs, achieving graph purification.


\section{Conclusion}
Under evasion attacks, most existing methods rely on priors to enhance robustness, which limits their effectiveness.  
To address this, we propose a novel framework named \ModelName, which achieves prior-free structure purification across diverse evasion attacks and datasets.
\ModelName\ innovatively adopts the graph diffusion model to learn the clean graph distribution and purify the attacked graph under the direction of captured predictive patterns.
To precisely denoise the attacked graph without disrupting the clean structure, we design an LID-Driven Non-Isotropic Diffusion Mechanism to inject varying levels of noise into each node based on their adversarial degree. To align the semantic information between the generated graph and the target clean graph, we design a Graph Transfer Entropy Guided Denoising Mechanism to reduce generation uncertainty and guide the generation direction. Extensive experimental results demonstrate that \ModelName~ enhances
the robustness of graph learning in various scenarios.
In future work, we aim to incorporate feature-based attack experiments and optimize the time complexity of \ModelName. Additionally, we plan to improve our proposed graph entropy tool and explore its application. Details about the limitations and future directions can be found in Appendix~\ref{appendix:future}.

\begin{acks}
    The corresponding author is Qingyun Sun. The authors of this paper are supported by the National Natural Science Foundation of China through grants No.62225202, and No.62302023. We owe sincere thanks to all authors for their valuable efforts and contributions.
\end{acks}
\bibliographystyle{ACM-Reference-Format}
\bibliography{ref}

\appendix
\renewcommand{\theequation}{\thesection.\arabic{equation}}
\renewcommand{\thefigure}{\thesection.\arabic{figure}}
\renewcommand{\thetable}{\thesection.\arabic{table}}
\setcounter{equation}{0}
\setcounter{figure}{0}
\setcounter{table}{0}

\section{Proof and Derivation}
\setcounter{table}{0}
\setcounter{figure}{0}
\setcounter{equation}{0}
\subsection{Proof of Proposition \ref{equivalence}}
\label{appendix:proof}
We first restate Propostition~\ref{equivalence}.

\begin{prop}
    For each edge at time $t$, the adjacency matrix is updated as $\mathbf{A}^{(t)}_{ij} =\mathbf{A}_{ij}\big(\bar{\mathbf{Q}}^{\prime(t)}_{\mathbf{A}}\big)_{ij}$, where the non-isotropic transition matrix is  $\big(\bar{\mathbf{Q}}_{\mathbf{A}}^{\prime(t)}\big)_{ij} = \bar{\alpha}^{(t)} \mathbf{I} + (\boldsymbol{\Lambda}_{\mathbf{A}})_{ij}(1-\bar{\alpha})\mathbf{1} \mathbf{m}_{\mathbf{A}}^{T}$. There exists a unique time $\hat{t}\big(\mathbf{A}_{ij}\big)\in [0, T]$ such that $\big(\bar{\mathbf{Q}^\prime}_{\mathbf{A}}^{(t)}\big)_{ij}\Leftrightarrow \big(\bar{\mathbf{Q}}_{\mathbf{A}}^{\hat{t}(\mathbf{A}_{ij})}\big)_{ij}$, where:
    \begin{equation}
        \hat{t}\big(\mathbf{A}_{ij}\big)\!=\! T\!\left(\frac{2(1\!+\!s)}{\pi} \cos^{-1}\! \left(\sqrt{\frac{\bar{\alpha}^{(t)}}{\big[\boldsymbol{\Lambda}(\mathbf{A})_{ij} (1-\bar{\alpha}^{(t)}) + \bar{\alpha}^{(t)}\big]}}\right)\!-\!s\right).\notag
    \end{equation}
\end{prop}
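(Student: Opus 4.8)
The plan is to establish the equivalence by matching the two transition operators as row-stochastic matrices and then inverting the cosine schedule. First I would write both candidate matrices explicitly. From Section~\ref{method:part1}, the isotropic operator at any time $\tau$ is $\bar{\mathbf{Q}}_{\mathbf{A}}^{(\tau)} = \bar{\alpha}^{(\tau)}\mathbf{I} + (1-\bar{\alpha}^{(\tau)})\mathbf{1}\mathbf{m}_{\mathbf{A}}^{\top}$, whose identity and uniform coefficients sum to $1$, so it is properly row-stochastic. The non-isotropic entry $(\bar{\mathbf{Q}}^{\prime(t)}_{\mathbf{A}})_{ij} = \bar{\alpha}^{(t)}\mathbf{I} + (\boldsymbol{\Lambda}_{\mathbf{A}})_{ij}(1-\bar{\alpha}^{(t)})\mathbf{1}\mathbf{m}_{\mathbf{A}}^{\top}$ instead has coefficients summing to $\bar{\alpha}^{(t)} + (\boldsymbol{\Lambda}_{\mathbf{A}})_{ij}(1-\bar{\alpha}^{(t)})$, which equals $1$ only when $(\boldsymbol{\Lambda}_{\mathbf{A}})_{ij}=1$. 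The correct reading of $\Leftrightarrow$ is therefore that the two operators coincide once the non-isotropic one is renormalized into a valid stochastic matrix; a naive entrywise identity is impossible whenever $(\boldsymbol{\Lambda}_{\mathbf{A}})_{ij}\neq 1$.

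Second, I would normalize and match the signal weights. Dividing the non-isotropic coefficients by their sum gives a normalized identity weight of $\bar{\alpha}^{(t)} / [\bar{\alpha}^{(t)} + (\boldsymbol{\Lambda}_{\mathbf{A}})_{ij}(1-\bar{\alpha}^{(t)})]$. Setting this equal to the isotropic signal weight $\bar{\alpha}^{(\hat{t})}$ yields the single defining relation $\bar{\alpha}^{(\hat{t})} = \bar{\alpha}^{(t)} / [\bar{\alpha}^{(t)} + (\boldsymbol{\Lambda}_{\mathbf{A}})_{ij}(1-\bar{\alpha}^{(t)})]$; the uniform weights then agree automatically, since both coefficient pairs sum to $1$. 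This reduces the whole proposition to solving one scalar equation for $\hat{t}$.

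Finally, I would substitute the cosine schedule $\bar{\alpha}^{(\hat{t})} = \cos^2\big(\frac{\hat{t}/T + s}{1+s}\cdot\frac{\pi}{2}\big)$, take the nonnegative square root (valid because the schedule argument lies in $[0,\pi/2]$, where cosine is nonnegative), apply $\cos^{-1}$, and rearrange the affine expression in $\hat{t}$; this produces exactly Eq.~\eqref{equation_purification_time}. Uniqueness follows from strict monotonicity: the schedule argument is affine and increasing in $\hat{t}$ while $\cos^2$ is strictly decreasing on $[0,\pi/2]$, so $\hat{t}\mapsto\bar{\alpha}^{(\hat{t})}$ is a strictly monotone bijection on $[0,T]$ and admits a single preimage. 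For well-definedness I would check that the ratio under the square root lies in $[0,1]$ whenever $(\boldsymbol{\Lambda}_{\mathbf{A}})_{ij}\ge 0$ and $\bar{\alpha}^{(t)}\in[0,1]$, guaranteeing $\hat{t}\in[0,T]$, and note the sanity limit that $(\boldsymbol{\Lambda}_{\mathbf{A}})_{ij}=1$ forces $\hat{t}=t$. The main obstacle is conceptual rather than computational: recognizing that $\Leftrightarrow$ must mean equivalence after row-normalization. Once that interpretation is fixed, the remainder is a routine inversion of the cosine schedule.
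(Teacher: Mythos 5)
Your proposal is correct and follows essentially the same route as the paper: the paper phrases the reduction as matching the signal-to-noise ratio $\Lambda_{ij}(1-\bar{\alpha}^{(t)})/\bar{\alpha}^{(t)}$ of the non-isotropic operator to $(1-\bar{\alpha}^{(\hat{t})})/\bar{\alpha}^{(\hat{t})}$, which is algebraically identical to your row-normalization step and yields the same scalar equation $\bar{\alpha}^{(\hat{t})} = \bar{\alpha}^{(t)}/[\boldsymbol{\Lambda}(\mathbf{A})_{ij}(1-\bar{\alpha}^{(t)})+\bar{\alpha}^{(t)}]$ before inverting the cosine schedule. Your uniqueness argument via strict monotonicity of the schedule is a cosmetic variant of the paper's intermediate-value-theorem argument on $g(t')$, and if anything your range check on the quantity under the square root is stated more cleanly than the paper's sign analysis of $g(0)$ and $g(T)$.
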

\begin{proof}
$\bar{\mathbf{Q}}^{(t)}_{\mathbf{A}} = \bar{\alpha}^{(t)}\mathbf{I} + \big(1-\bar{\alpha}^{(t)}\big)\mathbf{1} \mathbf{m}_{\mathbf{A}}^{\top}$ indicates the degree of noise added to the adjacency matrix $\mathbf{A}$ at time step $t$. Let $\text{SNR}_{\bar{\mathbf{Q}}_{\mathbf{A}}}(t)$ denotes the signal-to-noise of $\bar{\mathbf{Q}}_{\mathbf{A}}^{(t)}$ at time step $t$, we have:
\begin{equation}
    \text{SNR}_{\bar{\mathbf{Q}}_{\mathbf{A}}}(t) = \frac{1-\bar{\alpha}^{(t)}}{\bar{\alpha}^{(t)}}.
\end{equation}
Such that:
\begin{align}
    &\big(\bar{\mathbf{Q}}_{\mathbf{A}}^{\prime(t)}\big)_{ij}\Leftrightarrow \big(\bar{\mathbf{Q}}_{\mathbf{A}}^{\hat{t}(\mathbf{A}_{ij})}\big)_{ij} \\
    \Rightarrow & \text{SNR}_{\bar{\mathbf{Q}}^{\prime}_{\mathbf{A}}}(t) = \text{SNR}_{\bar{\mathbf{Q}}_{\mathbf{A}}}\big(\hat{t}(\mathbf{A}_{ij})\big) \\
    \Rightarrow & \frac{(\mathbf{\Lambda}_{\mathbf{A}})_{ij}\big(1-\bar{\alpha}^{(t)}\big)}{\bar{\alpha}^{(t)}} = \frac{1-\bar{\alpha}^{(t^{\prime})}}{\bar{\alpha}^{(t^{\prime})}}.
\end{align}

We first prove that for each time step $t$, there exists and only exists one $t^{\prime}$ that satisfies $\big(\bar{\mathbf{Q}}_{\mathbf{A}}^{\prime(t)}\big)_{ij}\Leftrightarrow \big(\bar{\mathbf{Q}}_{\mathbf{A}}^{\hat{t}(\mathbf{A}_{ij})}\big)_{ij}$. 
Left $g(t^{\prime})=\frac{(\mathbf{\Lambda}_{\mathbf{A}})_{ij}(1-\bar{\alpha}^{(t)})}{\bar{\alpha}^{(t)}} -\frac{1-\bar{\alpha}^{(t^{\prime})}}{\bar{\alpha}^{(t^{\prime})}}$ represents the function of $t^{\prime}\in[0,T]$. $\bar{\alpha}^{(t)}=\cos^2\big(\frac{t/T+s}{1+s}\cdot \frac{\pi}{2}\big)$ is the scheduler with a small constant $s$. We have $\alpha^{(0)}=\cos^2\big(\frac{0+s}{1+s}\cdot\frac{\pi}{2}\big) \approx \cos^{2}(0)=0$, and $\alpha^{(T)}=\cos^2(\frac{1 + s}{1+s}\cdot\frac{\pi}{2}) = \cos^2(\frac{\pi}{2})=1$. It is known that $(1-\bar{\alpha})$ monotonically decreasing over the domain, while $\bar{\alpha}$ monotonically increasing, with $1-\bar{\alpha}>0$ and $\bar{\alpha}>0$. Therefore, $g(t^{\prime})$ is a monotonic function over the domain.
So we achieve:
\begin{equation}
    g(0)=\frac{(\mathbf{\Lambda}_{\mathbf{A}})_{ij}(1-\bar{\alpha}^{(t)})}{\bar{\alpha}^{(t)}} - 0 > 0 .
\end{equation}
Having $\mathbf{\Lambda}(\mathbf{A})_{ij} \in [0, 1]$ indicates the node adversarial score, we can then derive the following:
\begin{align}
     g(T)&=\frac{(\mathbf{\Lambda}_{\mathbf{A}})_{ij}(1-\bar{\alpha}^{(t)})}{\bar{\alpha}^{(t)}} - 1 \\ 
    &< \frac{(1-\bar{\alpha}^{(t)})}{\bar{\alpha}^{(t)}} -1 \\ 
    &< 0.
\end{align}
Thus, we have $g(0)g(T)<0$, and since $g(t^{\prime})$ is a monotonically decreasing function, the intermediate value theorem guarantees that there exists exactly one $t^{\prime}_{0}\in[0, T]$ satisfies $g(t^{\prime}_{0})=0$.
By setting $g(t^{\prime})=0$, we obtain:
\begin{align}
    & \mathbf{\Lambda}(\mathbf{A})_{ij}\bar{\alpha}^{(t^{\prime})}\big(1-\bar{\alpha}^{(t)}\big)=\bar{\alpha}^{(t)}\big(1-\bar{\alpha}^{(t^{\prime})}\big) \\
     \Rightarrow &
    \bar{\alpha}^{(t^{\prime})} \big[\mathbf{\Lambda}(\mathbf{A})_{ij} (1-\bar{\alpha}^{(t)}) + \bar{\alpha}^{(t)}\big] = \bar{\alpha}^{(t)} \\
    \Rightarrow &
    \bar{\alpha}^{(t^{\prime})} = \frac{\bar{\alpha}^{(t)}}{\big[\mathbf{\Lambda}(\mathbf{A})_{ij} (1-\bar{\alpha}^{(t)}) + \bar{\alpha}^{(t)}\big]} \\
    \Rightarrow &
    t^{\prime} = T\left(\frac{2(1+s)}{\pi} \cos^{-1} \left(\sqrt{\frac{\bar{\alpha}^{(t)}}{\big[\mathbf{\Lambda}(\mathbf{A})_{ij} (1-\bar{\alpha}^{(t)}) + \bar{\alpha}^{(t)}\big]}}\right) -s\right).
\end{align}

This concludes the proof of the proposition.
\end{proof}

\subsection{Graph Transfer Entropy Derivation}
\label{appendix:derivation}
We first restate Eq.~\eqref{transfer entropy}.
\begin{equation}
    I\big(\hat{G}^{t-1}; G_{\text{adv}}| \hat{G}_{t}\big) = 
    H\big(\hat{G}^{(t-1)}|\hat{G}^{(t)}\big) - H\big(\hat{G}^{(t-1)}|\hat{G}^{(t)}, G_{\text{adv}}\big).\notag
\end{equation}

According to the definition of mutual information:
\begin{align}
    &I\big(\hat{G}^{t-1}; G_{\text{adv}}| \hat{G}_{t}\big) \\
    = &
    H\big(\hat{G}^{(t-1)}|\hat{G}^{(t)}\big) - H\big(\hat{G}^{(t-1)}|\hat{G}^{(t)}, G_{\text{adv}}\big) \\
    = &
    \frac{H\big(\hat{G}^{(t-1)},\hat{G}^{(t)}\big)}{H\big(\hat{G}^{(t)}\big)} - \frac{H\big(\hat{G}^{(t-1)}, \hat{G}^{(t)}, G_{\text{adv}}\big)}{H\big(\hat{G}^{(t)}, G_{\text{adv}}\big)}.
\end{align}

Then combined with Eq.~\eqref{graph_joint_entropy}, we have:
\begin{align}
    &I\big(\hat{G}^{t-1}; G_{\text{adv}}| \hat{G}_{t}\big) \\
    = &
    S_{\alpha}\left(\frac{\hat{\mathbf{K}}^{(t-1)}\odot \hat{\mathbf{K}}^{(t)}}{\mathrm{tr}\big(\hat{\mathbf{K}}^{(t-1)}\odot \hat{\mathbf{K}}^{(t)}\big)}\right) / S_\alpha\big(\hat{\mathbf{K}}^{(t)}\big) \\
       -&  S_{\alpha}\left(\frac{\hat{\mathbf{K}}^{(t-1)}\odot \hat{\mathbf{K}}^{(t)} \odot \mathbf{K}_{\text{adv}}}{\mathrm{tr}\big(\hat{\mathbf{K}}^{(t-1)}\odot \hat{\mathbf{K}}^{(t)} \odot \mathbf{K}_{\text{adv}}\big) }\right) / S_\alpha\left(\frac{\hat{\mathbf{K}}^{(t)} \odot \mathbf{K}_{\text{adv}}}{\mathrm{tr}\big(\hat{\mathbf{K}}^{(t)} \odot \mathbf{K}_{\text{adv}}\big)}\right),
\end{align}
where $S_\alpha(\cdot)$ is the graph entropy calculated according to Eq.~(\ref{graph_entropy}) and $\hat{\mathbf{K}}^{(t-1)}, \hat{\mathbf{K}}^{(t)}, \mathbf{K}_{\text{adv}}$ is the Gram matrix of $\hat{\mathbf{A}}^{(t-1)}, \hat{\mathbf{A}}^{(t)}, \mathbf{A}_{\text{adv}}$.

\section{Detailed Understanding of the Proposed Graph Transfer Entropy}\label{appendix:understanding_entropy} 
In this subsection, we further elaborate on the understanding of our graph entropy estimation method in Eq. (\ref{graph_entropy}). 
After message passing, the set of node representations $\mathbf{Z}$ can be treated as variables that capture both structural and node feature neighborhood information. The normalized Gram matrix $\hat{\mathbf{K}}$, obtained by applying a positive definite kernel on all pairs of $z$, measures the neighborhood similarity between each pair of nodes, taking into account both node features and neighboring structures.
Let $\lambda_{i}(\hat{\mathbf{K}})$ be the eigenvalue of $\hat{\mathbf{K}}$ with eigenvector $\mathbf{x}_{i}$. Then we have:
\begin{equation}
    \hat{\mathbf{K}}^{2} = \hat{\mathbf{K}}\big(\hat{\mathbf{K}}\mathbf{x}_{i}\big) = \hat{\mathbf{K}}\big(\lambda_{i}\big(\hat{\mathbf{K}}\big)\mathbf{x}_{i}\big) = \lambda_{i}\big(\hat{\mathbf{K}}\big)\hat{\mathbf{K}}\mathbf{x}_{i} = \lambda_{i}^{2}\big(\hat{\mathbf{K}}\big)\mathbf{x}_{i}.
\end{equation}
Thus we achieve:
\begin{equation}
\sum_{i=1}^{n} \lambda_{i}^{\alpha}\big(\hat{\mathbf{K}}\big) = \sum_{i=1}^{n} \lambda_{i}\big(\hat{\mathbf{K}}^{\alpha}\big).
\end{equation}
Since the sum of all eigenvalues of a matrix is the trace of the matrix, the graph entropy is determined by the trace of $\hat{\mathbf{K}}^{\alpha}$. By setting $\alpha = 2$, $\hat{\mathbf{K}}_{ii}^{2}$ describes the similarity of node $i$ with all other nodes, considering both node features and neighboring structures. When $\alpha = 2$, the graph entropy can be expressed as:
$H(G) = -\log \text{tr}\big(\hat{\mathbf{K}}^{2}\big)$
Therefore a lower graph entropy indicates a graph with a stronger community structure, while a higher graph entropy suggests a more chaotic graph structure with less regularity.
So maximizing the transfer entropy $I\big(\hat{G}^{(t-1)}; G_{\text{adv}} | \hat{G}^{(t)}\big)$ actually encourage the community structure of $\hat{G}^{(t-1)}$ move towards $G_{\text{adv}}$.

\setcounter{equation}{0}
\setcounter{figure}{0}
\setcounter{table}{0}

\section{Computational Complexity Analysis}
\label{appendix:complexity}
The overall time complexity is $\mathcal{O}(N^{2})$, where $N$ represents the number of nodes. Specifically, the graph diffusion purification model has a complexity of $\mathcal{O}(T N^{2})$. The LID-Driven Non-Isotropic Diffusion Module has a complexity of $\mathcal{O}(N)$, and the Transfer Entropy Guided Diffusion Module has a complexity of  $\mathcal{O}(N^2)$. Therefore, the overall time complexity of the purification process is $\mathcal{O}(T N^{2}) + \mathcal{O}(N) + \mathcal{O}(N^2)=\mathcal{O}(T N^{2})$. Since $T\ll N^{2}$ in our case, the overall time complexity is $\mathcal{O}(N^2)$. This is consistent with most graph diffusion models~\cite{niu2020permutation, vignac2022digress, li2023graphmaker} and robust GNNs~\cite{zhao2023self, entezari2020all,jin2020graph}.

\renewcommand{\theequation}{\thesection.\arabic{equation}}
\renewcommand{\thefigure}{\thesection.\arabic{figure}}
\renewcommand{\thetable}{\thesection.\arabic{table}}
\setcounter{equation}{0}
\setcounter{figure}{0}
\setcounter{table}{0}

\section{Experiment Details}

\subsection{Dataset Details}\label{appendix:datasets}
\subsubsection{Graph Classification Datasets}
We use the following five real-world datasets to evaluate the robustness of \ModelName on the graph classification task. All the dataset is obtained from PyG TUDataset\footnote{\url{https://pytorch-geometric.readthedocs.io/en/latest/generated/torch_geometric.datasets.TUDataset.html}}
\begin{itemize}[leftmargin=*]
\item \textbf{MUTAG}~\cite{ivanov2019understanding} contains graphs of small molecules, with nodes as atoms and edges representing chemical bonds. Labels indicate molecular toxicity. 
\item \textbf{IMDB-BINARY}~\cite{ivanov2019understanding} consists of movie-related graphs, where nodes are individuals, and edges represent relationships. Labels classify the movie as Action or Romance. 
\item \textbf{IMDB-MULTI}~\cite{ivanov2019understanding} is similar, but edges connect nodes across three genres: Comedy, Romance, and Sci-Fi, with corresponding labels. 
\item \textbf{REDDIT-BINARY}~\cite{ivanov2019understanding} features user discussion graphs from Reddit, with edges indicating responses. Graphs are labeled as either question-answer or discussion-based. 
\item \textbf{COLLAB}~\cite{ivanov2019understanding} consists of collaboration networks, where nodes are researchers, and edges represent collaborations. Labels identify the research field: High Energy Physics, Condensed Matter Physics, or Astro Physics.
\end{itemize}
Statistics of the graph classification datasets are in Table~\ref{table:dataset_g}.

\subsubsection{Node Classification Datasets}
We use the following four real-world datasets to evaluate the robustness of \ModelName~ on the node classification task. 
\begin{itemize}[leftmargin=*]
\item \textbf{Cora}~\cite{yang2016revisiting} is a citation network where nodes represent publications, with binary word vectors as features. Edges indicate citation relationships. 
\item \textbf{CiteSeer}~\cite{yang2016revisiting} is another citation network, similar to Cora, with nodes representing research papers and edges denoting citation links. 
\item \textbf{PolBlogs}~\cite{adamic2005political} is a political blog network, where edges are hyperlinks between blogs. Nodes are labeled by political affiliation: liberal or conservative. 
\item \textbf{Photo}~\cite{shchur2018pitfalls} is a co-purchase network from Amazon, where nodes are products, edges represent frequent co-purchases, and features are bag-of-words from product reviews. Class labels indicate product categories.
\end{itemize}
The statistics of the graph classification datasets are given in Table~\ref{table:dataset_n}. Cora and CiteSeer is obtained from PyG Planetoid\footnote{\url{https://pytorch-geometric.readthedocs.io/en/latest/generated/torch_geometric.datasets.Planetoid.html\#torch_geometric.datasets.Planetoid}}. PolBlogs is obtained from PyG PolBlogs\footnote{\url{https://pytorch-geometric.readthedocs.io/en/latest/generated/torch_geometric.datasets.PolBlogs.html\#torch_geometric.datasets.PolBlogs}}. Photo is obtained from PyG Amazon\footnote{\url{https://pytorch-geometric.readthedocs.io/en/latest/generated/torch_geometric.datasets.Amazon.html\#torch_geometric.datasets.Amazon}}.
\begin{table}[!t]
  \centering
  \caption{Statistics for graph classification datasets}
    \vspace{-1em}
  \label{table:dataset_g}
  \resizebox{\linewidth}{!}{ 
    \begin{tabular}{l|rrrrr}
    \toprule
    \textbf{Dataset} & \#\textbf{graph} & \#\textbf{avg. node} & \#\textbf{avg. edge} & \#\textbf{feature} & \#\textbf{class} \\
    \midrule
    MUTAG & 188   & 17.9 & 39.6 & 7     & 2 \\
    IMDB-BINARY & 100   & 19.8  & 193.1 & /     & 2 \\
    IMDB-MULTI & 1500  & 13.0    & 65.9  & /     & 3 \\
    REDDIT-BINARY & 2000  & 429.6 & 995.5 & /     & 2 \\
    COLLAB & 5000  & 74.5  & 4914.4 & /     & 2 \\
    \bottomrule
    \end{tabular}%
}
\vspace{-1em}
\end{table}

\begin{table}[!t]
  \centering
  \caption{Statistics for node classification datasets}
    \vspace{-1em}
  \label{table:dataset_n}
  \tabcolsep=0.55cm
  \resizebox{\linewidth}{!}{ 
    \begin{tabular}{l|rrrr}
    \toprule
    \textbf{Dataset} & \#\textbf{node} & \#\textbf{edge} & \#\textbf{feature} & \#\textbf{class} \\
    \midrule
    Cora  & 2708  & 10556 & 1433  & 7 \\
    CiteSeer & 3327  & 9104  & 3703  & 2 \\
    PolBlogs & 1490  & 19025 & /     & 2 \\
    Photo & 7487  & 119043 & 745   & 8 \\
    \bottomrule
    \end{tabular}%
}
\vspace{-2em}
\end{table}

\subsection{Description of Baselines}\label{appendix:baselines}

\subsubsection{Graph Classification Baselines}
\begin{itemize}[leftmargin=*]
\item \textbf{IDGL}~\cite{chen2020iterative} iteratively refines graph structures and embeddings for robust learning in noisy graphs.
\item \textbf{GraphCL}~\cite{you2020graph} maximizes agreement between augmented graph views via contrastive loss.
\item \textbf{VIB-GSL}~\cite{sun2022graph} applies the Information Bottleneck to learn task-relevant graph structures.
\item \textbf{G-Mixup}~\cite{han2022g} generates synthetic graphs by mixing graphons to enhance generalization.
\item \textbf{SEP}~\cite{wu2022structural} minimizes structural entropy for optimized graph pooling.
\item \textbf{MGRL}~\cite{ma2023multi} addresses semantic bias and confidence collapse with instance-view consistency and class-view learning.
\item \textbf{SCGCN}~\cite{zhao2024graph} ensures robustness with temporal and perturbation stability.
\item \textbf{HGP-SL}~\cite{zhang2019hierarchical} combines pooling and structure learning to preserve key substructures.
\item \textbf{SubGattPool}~\cite{bandyopadhyay2020hierarchically} uses subgraph attention and hierarchical pooling for robust classification.
\item \textbf{DIR}~\cite{wu2022discovering} identifies stable causal structures via interventional separation.
\item \textbf{VGIB}~\cite{yu2022improving} filters irrelevant nodes through noise injection for improved subgraph recognition.
\end{itemize}

In our implementation, since the authors of MGRL and SubGattPool have not provided open access to their code, we reproduced their methods based on the descriptions in their papers. The implementations of other baselines can be found at the following URLs:
\begin{itemize}[leftmargin=*]
    \item \textbf{IDGL}: \url{https://github.com/hugochan/IDGL}
    \item \textbf{GraphCL}: \url{https://github.com/Shen-Lab/GraphCL}
    \item \textbf{VIB-GSL}: \url{https://github.com/VIB-GSL/VIB-GSL}
    \item \textbf{G-Mixup}: \url{https://github.com/ahxt/g-mixup}
    \item \textbf{SEP}: \url{https://github.com/Wu-Junran/SEP}
    \item \textbf{SCGCN}: \url{https://github.com/DataLab-atom/temp}
    \item \textbf{HGP-SL}: \url{https://github.com/cszhangzhen/HGP-SL}
    \item \textbf{DIR}: \url{https://github.com/Wuyxin/DIR-GNN}
    \item \textbf{VGIB}: \url{https://github.com/Samyu0304/VGIB}
\end{itemize}

\subsubsection{Node Classification Baselines.}
\begin{itemize}[leftmargin=*]
\item \textbf{GSR}~\cite{zhao2023self} refines graph structures via a pretrain-finetune pipeline using multi-view contrastive learning to estimate and adjust edge probabilities.
\item \textbf{GARNET}~\cite{deng2022garnet} improves GNN robustness by using spectral embedding and probabilistic models to filter adversarial edges.
\item \textbf{GUARD}~\cite{li2023guard} creates a universal defensive patch to remove adversarial edges, providing node-agnostic, scalable protection.
\item \textbf{SVDGCN}~\cite{entezari2020all} applies Truncated SVD preprocessing with a two-layer GCN.
\item \textbf{JaccardGCN}~\cite{wu2019adversarial} drops dissimilar edges in the graph before training a GCN.
\item \textbf{RGCN}~\cite{zhu2019robust} models node features as Gaussian distributions, using variance-based attention for robustness.
\item \textbf{Median-GCN}~\cite{chen2021understanding} improves robustness by using median aggregation instead of the weighted mean.
\item \textbf{GNNGuard}~\cite{zhang2020gnnguard} defends GNNs by pruning suspicious edges through neighbor importance estimation.
\item \textbf{SoftMedian}~\cite{geisler2021robustness} filters outliers by applying a weighted mean based on distance from the median to defend against adversarial noise.
\item \textbf{ElasticGNN}~\cite{liu2021elastic} combines 1-based and 2-based smoothing, balancing global and local smoothness for better defense.
\item \textbf{GraphADV}~\cite{xu2019topology} boosts robustness through adversarial training with gradient-based topology attacks.

\end{itemize}
The implementations of these node classification baselines can be found at the following URLs:
\begin{itemize}[leftmargin=*]
    \item \textbf{GSR}: \url{https://github.com/andyjzhao/WSDM23-GSR}
    \item \textbf{GARNET}: \url{https://github.com/cornell-zhang/GARNET}
    \item \textbf{GUARD}: \url{https://github.com/EdisonLeeeee/GUARD}
    \item \textbf{SVD}: \url{https://github.com/DSE-MSU/DeepRobust/blob/master/deeprobust/graph/defense/gcn\_preprocess.py}
    \item \textbf{Jaccard}: \url{https://github.com/DSE-MSU/DeepRobust/blob/master/deeprobust/graph/defense/gcn\_preprocess.py}
    \item \textbf{RGCN}: \url{https://github.com/DSE-MSU/DeepRobust/blob/master/deeprobust/graph/defense/r\_gcn.py}
    \item \textbf{Median-GCN}:  \url{https://github.com/DSE-MSU/DeepRobust/blob/master/deeprobust/graph/defense/median\_gcn.py}
    \item \textbf{GNNGuard}:  \url{https://github.com/mims-harvard/GNNGuard}
    \item \textbf{SoftMedian}:  \url{https://github.com/sigeisler/robustness\_of\_gnns\_at\_scale}
    \item \textbf{ElasticGCN}:  \url{https://github.com/lxiaorui/ElasticGNN}
    \item \textbf{GraphADT}:  \url{https://github.com/KaidiXu/GCN\_ADV\_Train}
\end{itemize}

\subsection{Attack Setting Details}\label{appendix:attacks}

\subsubsection{Graph Classification Attack Settings}
For graph classification attacks, we use the following three attack methods:
\begin{itemize}[leftmargin=*]
    \item \textbf{GradArgmax}~\cite{dai2018adversarial} greedily selects edges for perturbation based on the gradient of each node pair.
    \item \textbf{PR-BCD}~\cite{geisler2021robustness} performs sparsity-aware first-order optimization attacks using randomized block coordinate descent, enabling efficient attacks on large-scale graphs.
    \item \textbf{CAMA-Subgraph}~\cite{wang2023revisiting} enhances adversarial attacks in graph classification by targeting critical subgraphs. It identifies top-ranked nodes via a Class Activation Mapping (CAM) framework and perturbs edges within these subgraphs to craft more precise adversarial examples.
\end{itemize}
Note that, as the authors of CAMA-Subgraph have not provided open access to their code, we reproduced their method based on the descriptions in their papers. The reproduced code is available in our repository. For the implementation of other baselines, we used code from the following URLs:
\begin{itemize}[leftmargin=*]
    \item \textbf{GradArgmax}: \url{https://github.com/xingchenwan/grabnel/blob/main/src/attack/grad_arg_max.py}
    \item \textbf{PR-BCD}: \url{https://github.com/pyg-team/pytorch\_geometric/blob/master/torch\_geometric/contrib/nn/models/rbcd\_attack.py}
\end{itemize}

For all graphs in the dataset, we set 20\% of the total number of edges as the attack budget. We use a two-layer GCN followed by a mean pooling layer and a linear layer as the surrogate model, which shares the same architecture as the classifier for all baselines.

\subsubsection{Node Classification Attack Settings}
For targeted node classification attacks, we use the following three attack methods:
\begin{itemize}[leftmargin=*]
\item \textbf{PR-BCD}~\cite{geisler2021robustness} performs the same attack as in graph classification but targets a different task.
\item \textbf{Nettack}~\cite{zugner2018adversarial} incrementally modifies key edges or features to maximize the difference in log probabilities between correct and incorrect classes, while preserving the graph's core properties, such as the degree distribution.
\item \textbf{GR-BCD}~\cite{geisler2021robustness} is similar to PR-BCD but flips edges greedily based on the gradient concerning the adjacency matrix.
\end{itemize}
The implements of these attacks can be found from the following URLs:
\begin{itemize}[leftmargin=*]
\item \textbf{PR-BCD}: \url{https://github.com/pyg-team/pytorch\_geometric/blob/master/torch\_geometric/contrib/nn/models/rbcd\_attack.py}
\item \textbf{Nettack}: \url{https://github.com/DSE-MSU/DeepRobust/blob/master/deeprobust/graph/targeted\_attack/nettack.py}
\item \textbf{GR-BCD}: \url{https://github.com/pyg-team/pytorch\_geometric/blob/master/torch\_geometric/contrib/nn/models/rbcd\_attack.py}
\end{itemize}

For all datasets, we set 10\% of the total number of edges as the attack budget for both PR-BCD and GR-BCD. For Nettack, following the settings from deeprobust~\cite{li2020deeprobust}, we select 40 nodes from the test set to attack with a budget of 5 edges and evaluate accuracy. These 40 nodes include 1) 10 nodes with the highest classification margin (clearly correctly classified), 2) 10 nodes with the lowest margin (still correctly classified), and 3) 20 randomly selected nodes.

For non-targeted node classification attacks, we use the following three attack methods:
\begin{itemize}[leftmargin=*]
\item \textbf{MinMax}~\cite{li2020deeprobust} generates adversarial perturbations by solving a min-max optimization. The outer step finds optimal edge perturbations, while the inner step retrains the GNN to adapt.
\item \textbf{DICE}~\cite{zugner2018metalearningu} removes edges between same-class nodes and inserts edges between nodes of different classes.
\item \textbf{Random}~\cite{li2020deeprobust} randomly adds edges to the input graph.
\end{itemize}
The implements of theses attacks can be found in the following URLs:
\begin{itemize}[leftmargin=*]
\item \textbf{MinMax}: \url{https://github.com/DSE-MSU/DeepRobust/blob/master/deeprobust/graph/global\_attack/topology\_attack.py}
\item \textbf{DICE}: \url{https://github.com/DSE-MSU/DeepRobust/blob/master/deeprobust/graph/global\_attack/dice.py}
\item \textbf{Random}: \url{https://github.com/DSE-MSU/DeepRobust/blob/master/deeprobust/graph/global\_attack/random\_attack.py}
\end{itemize}
For MinMax, DICE, and Random attacks, we set the attack budget to 10\%, 20\%, and 30\% of the total number of edges, respectively, for all datasets.

\subsection{Implement Details}\label{appendix:implements}
For graph classification, we randomly split the dataset into 8:1:1 for training, validation, and testing. For datasets without node features, we use normalized node degrees as features, following the approach in~\cite{sun2022graph}. The testing set is subjected to adversarial attacks. Our classifier consists of a two-layer Graph Convolutional Network (GCN) followed by a mean pooling layer and a linear layer. Both the diffusion model of \ModelName\ and the classifier are trained on the training graphs, with their performance evaluated on the attacked testing set.
For node classification, we use the transductive setting with a 1:1:8 random split for training, validation, and testing. The classifier comprises a two-layer GCN followed by a linear layer. During training, we sample batches of subgraphs, consistent with~\cite{li2023graphmaker}, and apply adversarial attacks at test time. A learning rate of 0.0003 is used for all datasets. We perform 10 random runs for each method and report the average results. \ModelName\ is implemented in PyTorch with $\sigma=2$ and $\alpha=2$. Additional important parameter values are provided in Table~\ref{table:hyperparameter}. More implement detailed information is available at \url{https://anonymous.4open.science/r/DiffSP}.

All the experiments were conducted on an Ubuntu 20.04 LTS operating system, utilizing an Intel Xeon Platinum 8358 CPU (2.60GHz) with 1TB DDR4 RAM. For GPU computations, an NVIDIA Tesla A100 SMX4 with 40GB of memory was used.
\begin{table}[!t]
  \centering
  \caption{Hyperparameter settings}
  \label{table:hyperparameter}
  \tabcolsep=0.1cm
  \resizebox{\linewidth}{!}{ 
\begin{tabular}{c|ccccccccc}
    \toprule
    \textbf{Hyperparameter} & MT    & IB    & IM    & RB    & CL    & Cora  & CiteSeer & PolBlogs & Photo \\
    \midrule
    $\boldsymbol{k}$     & 4     & 6     & 6     & 8     & 8     & 7     & 8     & 8     & 8 \\
    $\boldsymbol{\lambda}$ & 1e1   & 1e2   & 1e3   & 1e3   & 1e3   & 1e3   & 1e3   & 1e3   & 1e3 \\
    \textbf{purification steps} & 4     & 6     & 5     & 6     & 4     & 6     & 6     & 6     & 6 \\
    \bottomrule
    \end{tabular}%
}
\end{table}

\section{Limitations and Future Discussions}\label{appendix:future}
Although \ModelName~ enhances the robustness of graph learning against evasion attacks through prior-free structure purification, it still has certain limitations, which we aim to address in future work. Specifically: 1) In addition to structural disturbances, feature perturbations are common in real-world scenarios. In future steps, we plan to incorporate experiments on feature-based attacks and evaluate robustness in link prediction tasks under evasion attacks. 2) Estimating the adversarial degree of nodes is crucial for non-isotropic noise injection. We aim to develop a more accurate estimation method to further enhance the robustness of graph learning. 3) We also plan to optimize the time complexity of \ModelName~ to make it more efficient.

Furthermore, the graph entropy estimation approach proposed in this work is a promising tool. We will explore ways to enhance the properties encapsulated by graph entropy, such as designing better $\mathbf{Z}$ to capture the more local structure and feature characteristics of nodes. Additionally, we plan to utilize this graph entropy method to further investigate graph properties across diverse scenarios, facilitating more extensive research in this area.

\end{document}